\newtheorem{theorem}{Theorem}[section]
\newtheorem{proposition}[theorem]{Proposition}
\newtheorem{problem}{Problem}
\newcommand{\Ns}{N_{\rm{state}}}
\newcommand{\Nt}{N_t}
\renewcommand{\xi}{\overline{x}}
\renewcommand{\thispagestyle}[2]{}
\begin{document}
\thispagestyle{empty}
\setcounter{page}{0}

\begin{Huge}
\begin{center}
Computer Science Technical Report CSTR-{\tt22} \\
\today
\end{center}
\end{Huge}
\vfil
\begin{huge}
\begin{center}
Azam Moosavi, Razvan Stefanescu, Adrian Sandu
\end{center}
\end{huge}

\vfil
\begin{huge}
\begin{it}
\begin{center}
``{\tt Efficient Construction of Local Parametric Reduced Order Models \\ Using Machine Learning Techniques}''
\end{center}
\end{it}
\end{huge}
\vfil

\begin{large}
\begin{center}
Computational Science Laboratory \\
Computer Science Department \\
Virginia Polytechnic Institute and State University \\
Blacksburg, VA 24060 \\
Phone: (540)-231-2193 \\
Fax: (540)-231-6075 \\ 
Email: \url{rstefan@ncsu.edu} \\
Web: \url{http://csl.cs.vt.edu}
\end{center}
\end{large}

\vspace*{1cm}

\begin{tabular}{ccc}
\includegraphics[width=2.5in]{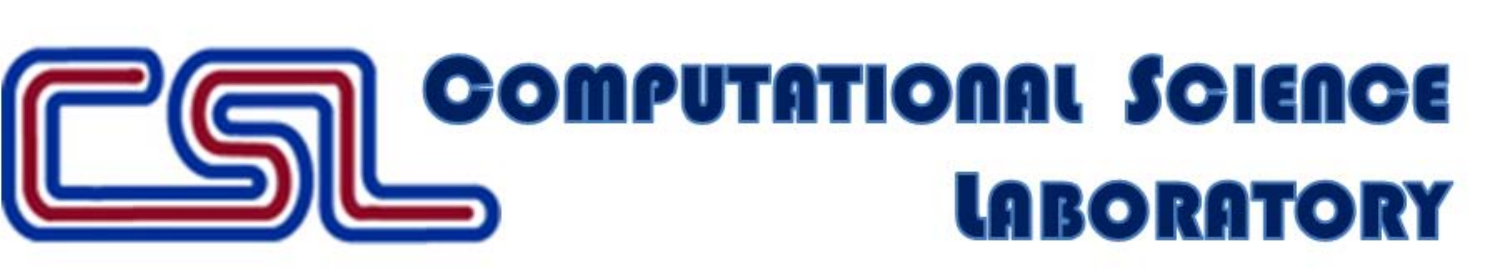}
&\hspace{2.5in}&
\includegraphics[width=2.5in]{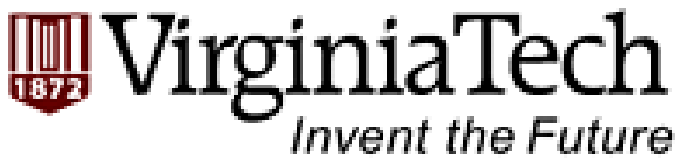} \\
{\bf\em Innovative Computational Solutions} &&\\
\end{tabular}

\newpage


\begin{abstract}
Reduced order models are computationally inexpensive approximations that capture the important dynamical characteristics of large, high-fidelity computer models of physical systems.
This paper applies machine learning techniques to improve the design of parametric reduced order models. Specifically, machine learning is used to develop feasible regions in the parameter space where the admissible target accuracy is achieved with a predefined reduced order basis, to construct parametric maps, to chose the best two already existing bases for a new parameter configuration from accuracy point of view and to pre-select the optimal dimension of the reduced basis such as to meet the desired accuracy. By combining available information using bases concatenation and interpolation as well as high-fidelity solutions interpolation we are able to build accurate reduced order models associated with new parameter settings. Promising numerical results with a viscous Burgers model illustrate the potential of machine learning approaches to help design better reduced order models.

\paragraph{key words}
reduced order models, high-fidelity models, data fitting, machine learning, feasible region of parameters, local reduced order models.
\end{abstract}




\section{Introduction}
\label{sect:Intro}

Many physical phenomena are described mathematically by partial differential equations (PDE),  and, after applying suitable discretization schemes, are simulated on a computer. PDE-based models frequently require calibration and parameter tuning in order to provide realistic simulation results. Recent developments in the field of uncertainty quantification  \cite{le2010spectral,smith2013uncertainty,grigoriu2012stochastic,cacuci2005sensitivity} provide
the necessary tools for validation of such models even in the context of variability and lack of knowledge on the input
parameters. While uncertainty propagation techniques such as Markov chain \cite{isaacson1976markov} and perturbation methods \cite{cacuci2003sensitivity,Cacuci2015687,cacuci2015second} can measure the impact of uncertain parameters on some quantities of interest, they often become infeasible due to the large number of model realizations requirement. Similar difficulties are encountered when solving Bayesian inference problems since sampling from posterior distribution is required.

For large-scale simulations, the variational \cite{sasaki1955fundamental,sasaki1958objective,courtier1987variational,navon1992variational,navon1986review,Stefanescu_lightning_2013} and ensemble \cite{evensen2009data,whitaker2002ensemble,kalnay2003atmospheric,houtekamer1998data} based inference approaches  are widely used in practice. Their efficiency decreases with increasing computational complexity of the underlying physical models. However, increasing model complexity is unavoidable as science fields progress. For example, finer space resolution of the underlying PDE models is one of the most important factors contributing to the one day/decade growth rate of the reliability of atmospheric weather predictions  \cite{buizza2010horizontal,wedi2014increasing}.

The need for computational efficiency motivated the development of surrogate models such as response surfaces, low resolution, and reduced basis models, in order to facilitate optimization, inference, and uncertainty quantification.

Data fitting or response surface models \cite{smith2013uncertainty} are constructed using only a data-driven angle. The underlying physics remains unknown and only the input-output behavior of the model is considered. Data fitting can use techniques such as regression, interpolation, radial basis function, Gaussian Processes, artificial neural networks and other supervised machine-learning methods. The latter techniques can automatically detect patterns in data, and one can use them to predict future data under uncertainty in a probabilistic framework \cite{murphy2012machine}. While easy to implement due to the non-intrusive nature, the prediction abilities may suffer since the governing physics is not specifically accounted for.


Low fidelity models attempt to reduce the computational burden of the high-fidelity models by neglecting some of the physical aspects (e.g., replacing Navier-Stokes and Large Eddy Simulations with inviscid Euler's equations and Raynolds-Averaged Navier-Stokes \cite{gano2005hybrid,sagaut2006large,wilcox1998turbulence}, or
decreasing the spatial resolution \cite{Courtier_Thepaut1994,tremolet2007incremental}). The additional approximations, however, may considerably degrade the physical solution with only a modest decrease of the computational load.

Reduced basis \cite{porsching1985estimation,BMN2004,grepl2005posteriori,patera2007reduced,rozza2008reduced,Dihlmann_2013} and Proper Orthogonal Decomposition \cite{karhunen1946zss,loeve1955pt,hotelling1939acs,lorenz1956eof,lumley1967structure}
are two of the popular reduced order modeling (ROM) strategies available in the literature. Data analysis is conducted to extract basis functions from experimental data or detailed simulations of high-dimensional systems (method of snapshots \cite{Sir87a, Sir87b, Sir87c}), for subsequent use in Galerkin projections that yield low dimensional dynamical models. While these type of models are physics-based and therefore require intrusive implementations, they are usually faster and more robust than data fitting and low-fidelity models.

Robustness of ROM in a parametric setting can be achieved by constructing a global basis \cite{hinze2005proper,prud2002reliable}, but this strategy generates large dimensional bases that may lead to slow reduced order models. Local approaches have been designed for parametric or time domains generating local bases for both the state variables \cite{Rapun_2010,dihlmann2011model} and non-linear terms \cite{eftang2012parameter,peherstorfer2014localized}. {\it Here we address the robustness issue of POD reduced order models by charting the parametric domain with feasible regions of local reduced order models where the reduced solutions are accurate within an admissible prescribed threshold.} Two essential ingredients are used to construct the parametric map: a database of reduced order models, and a data-fitting probabilistic model for the reduced order model errors. Then an incremental procedure uses the newly created probabilistic model to sample the parametric domain and generates a feasible region where a specific reduced order model provides accurate solutions within a prescribed tolerance. We then use a greedy approach to sweep the parameter domain and cover it with such feasible regions. This methodology is applied to the viscous 1D-Burgers model, and a parametric map for the viscosity parameter is generated for various error thresholds. Once the map is constructed there is no need to run the high-fidelity model again, since for each parameter value $\mu^0$ there exists a parameter $\mu$, and the associated reduced order model (basis and reduced operators), in whose interval the current value $\mu^0$ falls; the corresponding reduced solution error is accurately estimated a-priori. The dimension $k$ of the local basis is usually small since it depends only on one high-fidelity model trajectory.

{\it The database of reduced order models and the high-fidelity trajectories are used to statistically model the reduced order approximation errors.} We solve the resulting non-linear regression problems using Gaussian processes (GP) \cite{slonski2011bayesian,lilley2004gaussian} and artificial neural networks (ANN). Specifically, consider the reduced order model of dimension $k$ constructed using the high-fidelity solution computed with parameter value $\mu$. Let $\varepsilon$ be the  error of this reduced model when approximating the full solution at parameter configuration $\mu^0$. We use a GP or ANN approach to model the mapping $\{\mu^0, \mu, k\} \to \log(\varepsilon)$. Our approach is inspired by the multifidelity correction and ROMES methodologies available in the literature for estimation of surrogate models errors using a global basis. Multifidelity correction \cite{alexandrov2001approximation,eldred2004second,gano2005hybrid,huang2006sequential}
has been developed for low-fidelity models in the context of optimization. They simulate the input-output relation $\mu \to \varepsilon$, where $\varepsilon$ is the low-fidelity model errors. The ROMES method \cite{drohmann2015romes} introduces the concept of error indicators for reduced order models and generalizes the 'multifidelity correction' framework by approximating the mapping $\rho(\mu) \to \log(\varepsilon)$. The error indicators $\rho(\mu)$ include rigorous error bounds and reduced order residual norms, while $\varepsilon$ is the reduced order model error at $\mu$ using a reduced global basis. By estimating the log of the reduced order model error instead of the error itself the probabilistic map exhibits a lower variance as shown by our numerical experiments as well as those in \cite{drohmann2015romes}.

The size of a feasible region directly depends on the location of parameter value $\mu$ within the parametric space. {\it For overlapping feasible regions one can combine the available bases and the high-fidelity model trajectories in an efficient manner to generate more accurate reduced order models.} Three different approaches are proposed here, i.e. bases interpolation, bases concatenation, and high-fidelity model solution interpolation. Assuming a linear dependence we perform a Lagrangian interpolation of the bases in the matrix space \cite{lieu2004parameter}, or interpolate their projections onto some local coordinate systems \cite{lieu2004parameter,Amsallem_2008}. Following the idea of the spanning ROM introduced in \cite{weickum2006multi}, for a new parameter not found in the ROM database, we create a projection basis either by concatenating two of the available bases that generated the higher accurate reduced order solutions for the new configurations, or by interpolating the associated high-fidelity solutions and then extracting the singular vectors.

Finally, {\it we address the issue of a-priori selection of the reduced basis dimension for a prescribed accuracy of the reduced solution}. The standard approach is to analyze the spectrum of the snapshots matrix, and use the largest singular value removed from the expansion to estimate the accuracy level \cite{volkwein2007proper}. To take into account the error due to the integration in the reduced space, here we use data fitting models to approximate the mapping  $\{\mu, \log(\varepsilon)\} \to k$. Numerical results obtained using Gaussian processing and artificial neural networks are very promising.

The remainder of the paper is organized as follows. Section \ref{sect:ROM} reviews the reduced order modeling parametric framework. The problems solved herein are formulated in Section \ref{sect:problem_Des}. Gaussian processes and artificial neural networks are reviewed in Section \ref{sect:prob_fram}. The proposed techniques  for generating reduced order bases for new parameter configurations are introduced in Section \ref{sec:data_comb}. Section \ref{sect:experm} presents the details of the data fitting models, constructs the parametric map for the viscous 1D-Burgers model, and analyses the probabilistic model's performance. Conclusions are drawn in Section \ref{sect:conc}.

\section{Parametrized Reduced Order Modeling}\label{sec:ROM}
\label{sect:ROM}

Proper Orthogonal Decomposition has been successfully applied in numerous applications such as compressible flow \citep{Rowley2004}, computational fluid dynamics \citep{Kunisch_Volkwein_POD2002,Rowley2005,Willcox02balancedmodel}, and aerodynamics  \citep{Bui-thanh04aerodynamicdata}, to mention a few. It can be thought of as a Galerkin approximation in the spatial variable built from functions corresponding to the solution of the physical system at specified time instances. A system reduction strategy for Galerkin models of fluid flows leading to dynamic models of lower order based on a partition in slow, dominant, and
 fast modes, has been proposed in \cite{Noack2010}. Closure models and stabilization strategies for POD of turbulent flows have been investigated in \cite{San_Iliescu2013,wells2015regularized}.

In this paper we consider discrete inner products (Euclidian dot product), though continuous products may be employed as well.  Generally, an atmospheric or oceanic computer model is described by the following semi--discrete dynamical system:
\begin{equation}
\label{eqn::-4}
\frac{d{\bf x}(\mu^0,t)}{dt} = {\bf F}({\bf x},t,\mu^0),~~~~{\bf x}(\mu^0,0) = {\bf x}_0 \in \mathbb{R}^{\Ns},
\quad \mu^0 \in \mathcal{\tilde P}.
\end{equation}
The input-parameter $\mu^0$ typically characterizes the physical properties of the flow. For a given parameter configuration $\mu^0$ we select an ensemble of $ N_t $ time instances of the flow ${\bf x}_{t_1}^{\mu^0},...,{\bf x}_{t_{N_t}}^{\mu^0}  \in \mathbb{R}^{\Ns}$, where ${\Ns}$ is the total number of discrete model variables, and $N_t \in \mathbb{N},~N_t>0$.  The POD method chooses an orthonormal basis $U_{\mu^0}=[{\bf u}_{i}^{\mu^0}]_{i=1,..,k}\in \mathbb{R}^{{\Ns}\times k}$, $k>0$, such that the mean square error between ${\bf x}(\mu^0,t_i)$ and the POD expansion ${\bf x}_\textsc{pod}^{\mu^0}(t_i) = U_{\mu^0}{\bf \tilde x}(\mu^0,t_i)$, ${\bf \tilde x}(\mu^0,t_i) \in \mathbb{R}^k$, is minimized on average. The POD space dimension $k \ll {\Ns}$ is appropriately chosen to capture the dynamics of the flow. Algorithm \ref{euclid} describes the reduced order basis construction procedure \cite{stefanescu2014comparison}.

\begin{algorithm}
 \begin{algorithmic}[1]
 \State Compute the singular value decomposition for the snapshots matrix $[{\bf x}_{t_1}^{\mu^0}~~ \dots ~~{\bf x}_{t_{N_t}}^{\mu^0}] = \bar U_{\mu^0} \Sigma_{\mu^0} {\bar V}^T_{\mu^0},$ with the singular vectors matrix $\bar U_{\mu^0} =[{\bf u}_i^{\mu^0}]_{i=1,..,\Ns}.$
 \State Using the singular-values $\lambda_1\geq \lambda_2\geq ...\lambda_n\geq 0$ stored in the diagonal matrix $\Sigma$, define $I(m)=( {\sum_{i=1}^m \lambda_i)/(\sum_{i=1}^{\Ns} \lambda_i})$.
\State Choose $k$, the dimension of the POD basis, such that $ k=\min_m \{I(m):I(m)\geq \gamma\}$ where $0 \leq \gamma \leq 1$ is the percentage of total information captured by the reduced space $\mathcal{X}^k=\textnormal{range}(U_{\mu^0})$. Usually $\gamma=0.99$.
 \end{algorithmic}
 \caption{POD basis construction}
 \label{euclid}
\end{algorithm}

Next, a Galerkin projection of the full model state and equations \eqref{eqn::-3} onto the space $\mathcal{X}^k$ spanned by the POD basis elements is used to obtain the reduced order model:
\begin{equation}\label{eqn::-3}
 \frac{d{\bf \tilde x}(\mu^0,t)}{dt} = U_{\mu^0}^T\,{\bf F}\bigg(U_{\mu^0}{\bf \tilde x}(\mu^0,t),t,\mu^0\bigg),
 \quad {\bf \tilde x}(\mu^0,0) = U_{\mu^0}^T\,{\bf x}(0).
 \end{equation}
The efficiency of the POD-Galerkin technique is limited to linear or bilinear terms, since the projected nonlinear terms at every discrete time step still depend on the number of variables of the full model. In case of polynomial nonlinearities the tensorial POD technique \cite{stefanescu2014comparison}  can be employed to efficiently remove the dependence
on the full dimension by manipulating the order of computions. A considerable reduction in complexity is achieved by the Discrete Empirical Interpolation Method (DEIM) \cite{ChaSor2010,Stefanescu2013}, a discrete variation of Empirical Interpolation Method \cite{MBarrault_YMaday_NDNguyen_ATPatera_2004a}, for any type of nonlinear terms.

While being accurate for the given parameter configuration, the reduced model  \eqref{eqn::-3} loses accuracy when moving away from the initial setting. Several strategies have been proposed to derive a basis that spans the entire parameter space. These includes the reduced basis methods combined with the use of error estimates \cite{rozza2008reduced,quarteroni2011certified,prud2002reliable}, global POD \cite{taylor2004towards,schmit2003improvements}, Krylov-based sampling methods \cite{daniel2004multiparameter,weile1999method}, and  greedy techniques  \cite{haasdonk2008reduced,nguyen2009reduced}. The fundamental assumption used by these approaches is that a smooth low-dimensional global manifold characterizes the model solutions over the entire parameter domain. However, in order to ensure high accuracy of the reduced solution across the parameter space, the dimension of the reduced basis has to be increased in practice, leading to high computational costs. To alleviate this drawback we propose an alternative approach based on local parametric reduced order models.

\section{Problem Description and Solution Methodology}
\label{sect:problem_Des}

 This work addresses the following problems in the construction of reduced order models: designing the parametric map, selecting the best two already existing bases for a new parameter configuration from accuracy point of view, and selecting the optimal dimension of the reduced basis. We formulate them in detail below.

\subsection{Designing the parametric map}

\begin{problem}[Efficient approximate ROMs]
For an arbitrary parameter configuration $\mu^0 \in \mathcal{\tilde{P}}$ construct a reduced order model \eqref{eqn::-3} that provides an accurate and efficient approximation of the high-fidelity solution
\eqref{eqn::-4}:
\begin{equation}\label{eqn::ROM_error_constrain}
  \| {\bf x}(\mu^0,t_i) - U{\bf \tilde x}(\mu^0,t_i) \|_2 < \bar{\varepsilon}, \quad i=1,..,\Nt,
\end{equation}
for some prescribed admissible error level $\bar{\varepsilon} > 0$. The snapshots used to generate the basis $U$ can be obtained with any parametric configuration that belongs to $\mathcal{\tilde{P}}$.
\end{problem}

A simple solution is to solve the high-fidelity model for the specific configuration $\mu^0$, and then build the corresponding reduced order model. However, this approach is computationally expensive.

Our methodology proposes to select a small countable subset $\mathcal{I} = \{\mu_j,~j=1,..,M\} \subset \mathcal{\tilde{P}},~M>0$ and for each $\mu_j$, a reduced order basis $U_{\mu_j}$ along with the reduced operators are constructed for $j=1,..,M$. We denote by $\mathcal{U}$ the set of bases $U_{\mu_j},~j=1,..,M$. Then for each $\mu^0 \in \mathcal{\tilde{P}}$ we can find a basis $U_{\mu_j} \in \mathcal{U}$ and the suitable reduced order model such that its solution satisfies \eqref{eqn::ROM_error_constrain} for $U=U_{\mu_j}$.

This strategy relies on the assumption that a reduced order basis and operators built for a specific parameter configuration $\mu_j \in \mathcal{\tilde{P}}$ can be used to design a reduced order model capable to accurately approximate the solution of the high-fidelity model \eqref{eqn::-4} for all $\mu^0 \in \mathcal{B}(\mu_j,r_j) \cap \mathcal{\tilde{P}}$, where $\mathcal{B}(\mu_j,r_j)$ is the closed ball of radius $r_j \geq 0$ centered at $\mu_j$. Specifically, for $\mu^0 \in \mathcal{B}(\mu_j,r_j) \cap \mathcal{\tilde{P}}$, the reduced order model is constructed by employing the basis $U_{\mu_j}$ and the reduced operators designed at $\mu_j$, i.e.
\begin{equation}
\label{eqn:ROM-from-different-basis}
 \frac{d{\bf \tilde x}(\mu^0,\mu_j,t)}{dt} = U_{\mu_j}^T\,{\bf F}\bigg(U_{\mu_j}\,{\bf \tilde x}(\mu^0,\mu_j,t),t,\mu^0\bigg),\quad {\bf \tilde x}(\mu^0,0) = U_{\mu_j}^T\,{\bf x}(0).
 \end{equation}
Then, by selecting a small radius $r_j$, one can be able to obtain
\begin{equation}
\label{eqn:ROM-from-different-basis-accuracy}
  \| {\bf x}(\mu^0,t_i) - U_{\mu_j}\,{\bf \tilde x}(\mu^0,\mu_j,t_i) \|_2 < \bar{\varepsilon},\quad i=1,..,\Nt,
\end{equation}
for all $\mu^0 \in \mathcal{B}(\mu_j,r) \cap \mathcal{\tilde{P}}$.

The parametric map construction process ends as soon as the entire parameter domain $\mathcal{\tilde{P}}$ is covered with a finite union of overlapping balls $\mathcal{B}(\mu_j,r_j),~j=1,..,M$, corresponding to different reduced order bases and local models
\begin{equation}
\label{eqn:parameter_map}
  \mathcal{\tilde{P}} \subset \bigcup_{j=1}^M \mathcal{B}(\mu_j,r_j),
\end{equation}
such that for each $j=1,2,..,M$ and $\forall \mu^0 \in \mathcal{B}(\mu_j,r) \cap \mathcal{\tilde{P}}$, the solution of the reduced order model \eqref{eqn:ROM-from-different-basis} depending on the basis $U_{\mu_j}$ fulfils the accuracy condition \eqref{eqn:ROM-from-different-basis-accuracy}.

This approach is inspired from the construction of local reduced order models  where the time domain is split in multiple regions \cite{Rapun_2010,peherstorfer2014localized}. In this way the reduced basis dimension is kept small allowing for fast on-line simulations. The cardinality of $\mathcal{I}$ depends inversely proportional with the prescribed level of accuracy $\bar{\varepsilon}$.  As the desired error threshold $\bar{\varepsilon}$ decreases, the map changes since usually the radii $r_j$ are expected to become smaller, and more balls are required to cover the parametric domain, i.e. $M$ is increased.

The construction of the parametric map \eqref{eqn:parameter_map} using the local reduced order models requires the following ingredients:
\begin{enumerate}
  \item The ability to probe the vicinity of $\mu_j \in \mathcal{\tilde{P}}$ and  to efficiently estimate the level of error
\begin{equation}\label{eqn:level_error}
  {\varepsilon} = \max_{i=1,..,N_t}~\| {\bf x}(\mu^0,t_i) - U_{\mu_j}{\bf \tilde x}(\mu^0,\mu_j,t_i) \|_2,
\end{equation}
\item The ability to  find $r_j>0$ such that $\varepsilon \leq \bar{\varepsilon}$ for all $\mu^0 \in \mathcal{B}(\mu_j,r) \cap \mathcal{\tilde{P}}$. This can be theoretically achieved by assuming that the error $\varepsilon$ is monotonically increasing with larger distances $d(\mu^0,\mu_j)$. However, this is not necessarily true and in practice this is obtained by sampling.
  \item The ability to identify the location of a new $\mu_\ell$ (for the construction of a new local reduced order model) given the locations of the previous local parameters $\mu_j$, $j=1,..,\ell-1$, so that
\begin{equation}
\label{eqn:ball_constrain}
 \mathcal{B}(\mu_\ell,r_\ell) \not \subset \bigg( \bigcup_{i=1}^{\ell-1} \mathcal{B}(\mu_i,r_i) \bigg), \quad
  \mathcal{B}(\mu_\ell,r_\ell) \bigcap \bigg( \bigcup_{i=1}^{\ell-1} \mathcal{B}(\mu_i,r_i) \bigg) \neq \emptyset.
\end{equation}
\end{enumerate}

The implementation of the first ingredient does not employ a-posteriori error estimation formulas \cite{nguyen2009reduced}. Inspired from the 'multifidelity correction' and ROMES methodologies we construct probabilistic models to approximate the level of error $\varepsilon$ in \eqref{eqn:level_error}. Gaussian process and artificial neural networks are used to build the probabilistic functions to model the mapping $(\mu^0, \mu_j, k) \to
\log(\varepsilon)$. Since the dimension of basis determines the level of error we include it among the input features. To design probabilistic models with reduced variances we look to approximate the logarithm of the error as suggested in \cite{drohmann2015romes}.

The above machine learning techniques allow to sample the vicinity of $\mu_j$ and estimate the error for each sample parameter value. Based on these error estimates we construct  the ball  $\mathcal{B}(\mu_j,r)$, or perhaps a larger set called a $\mu_j-$feasible region, where the local reduced order model is accurate within the prescribed threshold $\bar{\varepsilon}$.

Next, a greedy algorithm is applied to identify the location of a new parametric configuration $\mu_\ell$ (for the construction of a new basis) depending on the locations of the previous $\mu_i,~i=1,..,\ell-1$. Constraint \eqref{eqn:ball_constrain} is imposed so the entire parametric domain $\mathcal{\tilde{P}}$ satisfies \eqref{eqn:parameter_map} after the map construction is finished.

For the parametric area situated at the intersection of different feasible regions we can assign a new reduced order model based on the information required to construct the initial feasible regions. This is achieved by interpolation or concatenation of the underlying reduced bases or interpolation of the available high-fidelity solutions, as described in detail in Section \ref{sec:data_comb}.

\subsection{Selecting the best two already existing bases for a new parameter configuration}

Since the error of the reduced order solution at a new parameter location $\mu^0$ is not necessarily smaller with the decrease of the distance $d(\mu^0,\mu_j)$, the following more general problem is posed.

\begin{problem}[Selection of best bases]
For a new parameter configuration $\mu^0$ find the best available bases (among the existing ones) that provide the most accurate reduced order model solution.
\end{problem}

The capability of the already proposed probability models can be used to estimate the error
\begin{equation}\label{eqn:level_error2}
  {\varepsilon} = \| {\bf x}(\mu^0,t_i) - U_{\mu_j}{\bf \tilde x}(\mu^0,\mu_j,t_i) \|_2~,i=1,..,N_t,
\end{equation}
for all available $U_{\mu_j},~j=1,2,..$ in the database, and the bases that lead to the smallest estimated errors are selected. This approach is discussed in Section \ref{subsec:select_best_basis}.

\subsection{Selecting the dimension of the reduced basis}

\begin{problem}[Optimal basis dimension]
Find the optimal dimension of the basis $U_{\mu}$ for the parametric configuration $\mu$ such that the error is smaller than the prescribed threshold
\begin{equation}
\label{eqn:level_error3}
  \| {\bf x}(\mu,t_i) - U_{\mu}\,{\bf \tilde x}(\mu,\mu,t_i) \|_2 \le \bar{\varepsilon}, \quad i=1,..,N_t.
\end{equation}
\end{problem}

By optimal we understand the smallest dimension that enables the reduced order model solution to satisfy the error constraint \eqref{eqn:level_error3}. The basis dimension represents one of the most important characteristics of a reduced order model. The reduced manifold size directly affects both the on-line computational complexity of the reduced order model and its accuracy \cite{kunisch2001galerkin,Hinze_Wolkwein2008,fahl2003reduced}. By increasing the size of the basis the projection error decreases and the accuracy of the reduced order model is enhanced. Consequently, the spectrum of the snapshots matrix offers guidance regarding the choice of the reduced basis size when some prescribed reduced order model error is desired. However the accuracy depends also on integration errors in the case of unsteady  models as stated in \cite{homescu2005error}.

We seek to estimate the optimal size of the reduced order model by accounting for both the projection and integration errors. For this we use data fitting models to approximate the mapping $\{\mu,\log\bar\varepsilon\} \to k$, as explained in Section \ref{sect:optimal_base}.

For all the problems addressed in this study a general probabilistic framework is introduced in Section \ref{sect:prob_fram}, along with the description of the supervised machine learning techniques used to construct the discussed probabilistic models.  For each problem we discuss the dataset, the features, as well as the accuracy and stability of the predictions of the associated data fitting models.

\section{Supervised Machine Learning Techniques}
\label{sect:prob_fram}

Consider a random vector ${\bf z}$. Neural networks and Gaussian processes are used to build a probabilistic model $ \phi: {\bf z} \rightarrow \hat{y} $, where $\phi$ is a transformation function that learns through the input feature vector ${\bf z} \in \Omega$ (the sample space) to estimate the deterministic output $y \in \mathbb{R}$ \cite{murphy2012machine}. The estimator $\hat{y}$ is expected to have a low variance. The features of $ {\bf z} $ should be descriptive of the underlying problem at hand \cite{bishop2006pattern}. The accuracy and stability of estimations are assessed using the K-fold cross validation technique. The samples are split into K subsets (``folds''), where typically $3 \le K \le 10$. The machine is trained on $K-1$ sets and tested on the $K$-th set in a round-robin fashion \cite{murphy2012machine}.  Each fold induces a specific error quantified as the average of the absolute values of the differences between the predicted and the $K$-th set values.
\begin{subequations}
\begin{equation}
\label{eqn:err_fold}
\textnormal{E}_{\rm fold}=\frac{\sum_{i=1}^N | \hat{y_i}-y_i | }{N} , \quad
\textnormal{VAR}_{\rm fold}=\frac{\sum_{i=1}^N \left( \hat{y_i} - \textnormal{E}_{\rm fold} \right)^2}{N},
\end{equation}
where {\color{magenta} $N$} is the number of test samples in the fold.  The error is then averaged over all folds:
\begin{equation}
\label{eqn:err_fold_average}
\textnormal{E}=\frac{\sum_{\textnormal{fold}=1}^K\,  \textnormal{E}_{\rm fold}  }{K}, \quad
\textnormal{VAR}=\frac{\sum_{\textnormal{fold}=1}^K \left(\textnormal{E}_{\rm fold}- \textnormal{E} \right)^2}{K}.
\end{equation}
\end{subequations}

The variance of the prediction results \eqref{eqn:err_fold} accounts for the sensitivity of the model to the particular choice of data set. It quantifies the stability of the model in response to the new training samples. A smaller variance indicates more stable predictions, however, this sometimes translates into a larger bias of the model. Models with small variance and high bias make strong assumptions about the data and tend to underfit the truth, while models with high variance and low bias tend to overfit the truth \cite{biasVar_NG} . The trade-off between bias and variance in learning algorithms is usually controlled via techniques such as regularization or bagging and boosting \cite{bishop2006pattern}.

In what follows we briefly review the Gaussian process and Neural network techniques.


\subsection{Gaussian process kernel method}
\label{sect:GP}
A Gaussian process is a collection of random variables, any finite number of which have a joint Gaussian distribution  \cite{rasmussen2006gaussian}. A Gaussian process is fully described by its mean and covariance functions
\begin{equation}
\label{GP_Dist}
\phi(\mathbf{z}) \sim \textnormal{gp}\, \bigl(m(\mathbf{z}), k(\mathbf{z}_i, \mathbf{z}_j ) \bigr),
\end{equation}
where \cite{rasmussen2006gaussian}
\[
\begin{array}{lr} m(\mathbf{z})=\mathbb{E}\left[ \phi(\mathbf{z}) \right], \quad
k(\mathbf{z}_i, \mathbf{z}_j )= \mathbb{E} \left[\left(\phi(\mathbf{z}_i)-m(\mathbf{z}_i)\right) \left( \phi(\mathbf{z}_j)- m (\mathbf{z}_j) \right)  \right].
\end{array}
\]
In this work we employ the commonly used squared-exponential-covariance Gaussian kernel with \cite{rasmussen2006gaussian}:
\begin{equation}
\label{eq_cov}
k(\mathbf{z}_i ,\mathbf{z}_j)=\sigma^2_\phi\, \exp \left(-\frac{\left( \mathbf{z}_i -  \mathbf{z}_j \right)^2}{2\, \ell ^2} \  \right)+ \sigma^2_n  \, \delta_{i,j},
\end{equation}
where $\mathbf{z}_i $ and $\mathbf{z}_j$ are the pairs of data points in training or test samples, and $\delta $ is the Kronecker delta symbol. The model \eqref{eq_cov} has three hyper-parameters. The length-scale $\ell$ governs the correlation among data points. The signal variance $\sigma^2 _\phi$ and the noise variance $\sigma^2 _n $ govern the precision of variance and noise, respectively. 

Consider a set of training data points $\mathbf{z}_1, \mathbf{z}_2, \cdots \mathbf{z}_n $, and the corresponding noisy observations $y_1, y_2, \cdots y_n$,
\begin{equation}
\label{GP_training}
y_i=\phi(\mathbf{z}_i)+ \epsilon_i ,\quad \epsilon_i \sim \mathcal{N} \left(0, \sigma^2_n \right), \quad i = 1,\dots,n.
\end{equation}
%

Consider also the set of test points $\mathbf{z}_1 ^*, \mathbf{z}_2 ^*, \cdots , \mathbf{z}_m ^*$ and the predictions $\hat{y}_1, \hat{y}_2, \cdots \hat{y}_m$
\begin{equation}
\label{GP_test}
\hat{y}_i=\phi\left(\mathbf{z}_i^*\right), \quad i = 1,\dots,m.
\end{equation}
For a Gaussian prior the joint distribution of training outputs $y$ and test outputs $\hat{y}$ is:
\begin{equation}
\label{GP_prior}
\begin{bmatrix}
y\\
\hat{y}
\end{bmatrix}
\sim
\mathcal{N}
\left(
\begin{bmatrix} m(\mathbf{z}) \\ m(\mathbf{z}^*) \end{bmatrix}\, , \,
\begin{bmatrix}
k(\mathbf{z},\mathbf{z})+\sigma^2_nI & k(\mathbf{z},\mathbf{z}^*)\\
k(\mathbf{z}^*,\mathbf{z}) & k(\mathbf{z}^*,\mathbf{z}^*)\\
\end{bmatrix}
\right)
\end{equation}
The predictive distribution represents the posterior after observing the data \cite{bishop2006pattern} and is given by:
\begin{equation}
\label{GP_posterior}
p\left(\hat{y}|\mathbf{z},y,\mathbf{z}^* \right) \sim \mathcal{N}
\left(\, k(\mathbf{z}^*,\mathbf{z})\left( k(\mathbf{z},\mathbf{z})+\sigma^2_nI \right)^ {-1}y\, , \,
k(\mathbf{z}^*,\mathbf{z}^*)- k(\mathbf{z}^*,\mathbf{z})\left( k(\mathbf{z},\mathbf{z})+\sigma^2_nI \right)^{-1} k(\mathbf{z},\mathbf{z}^*) \, \right).
\end{equation}
The prediction of Gaussian process will depend on the choice of the mean and covariance functions, and on their hyperparameters.  The hyperparametrs can be inferred from the data by minimizing the marginal negative log-likelihood function $\theta = \arg\min\, L(\theta)$, where
\[
L(\theta) = - \log\, p(y|\mathbf{z},\theta)=\frac{1}{2} \log |k \left(\mathbf{z},\mathbf{z} \right)|
+ \frac{1}{2}  (y-m(\mathbf{z}))^T\, k \left(\mathbf{z},\mathbf{z} \right)^{-1}\, (y-m(\mathbf{z})) + \frac{n}{2}\, \log \left( 2 \pi \right).
\]
%

\subsection{Artificial Neural networks}
\label{sect:NN}
The study of artificial neural networks (ANNs)  begin in the 1910s in order to intimate human brain's biological structure.
Pioneering work was carried out by Rosenblatt, who proposed a three-layered network structure, the perceptron \cite{hagan2014neural} .
%
ANNs detect the pattern of data by discovering the input--output relationships. Applications include the approximation of functions, regression analysis, time series prediction, pattern recognition, and speech synthesis and recognition \cite{jang1997neuro,ayanzadeh2011fossil}.

The architecture of ANNs is schematically represented in Figure \ref{fig:NN_struct}. ANNs consist of neurons and connections between the neurons (weights).
Neurons are organized in layers, where at least three layers of neurons (an input layer, a hidden layer, and an output layer) are required for construction of a neural network.
The input layer distributes input signals $\mathbf{z}_1, \mathbf{z}_2, \cdots, \mathbf{z}_n$ to the hidden layer. For a neural network with $L$ hidden layers and $m$ neurons in the hidden layer, let $ \hat{y}_j$  be the vector of outputs from layer $\ell$,
$b^\ell$ the biases at layer $\ell$, and $w_{kj}^\ell$ the weight connecting the neuron $j$ to the $k_{th}$ input. Then the feed-forward operation is:
\[
 \begin{array}{lr}
   \mathbf{x}_j^{\ell+1}=\sum_{k=1} w_{kj}^{\ell+1} \hat{y}^{\ell}_k+b_j^{\ell+1},  \quad \hat{y}^0_j= \mathbf{z}_j,\quad j=1,\cdots m \\
   \hat{y}_j^{\ell+1}=\phi \left(\mathbf{x}_j^{\ell+1}  \right), \quad   \ell=0, 1, \cdots, L-1
 \end{array}
\]
The differentiable function $\phi$ is the transfer function and can be log-sigmoid, hyperbolic tangent sigmoid, or linear transfer function.
%
\begin{figure}[H]
	\begin{centering}
	\includegraphics[width=0.35\textwidth, height=0.26\textwidth]{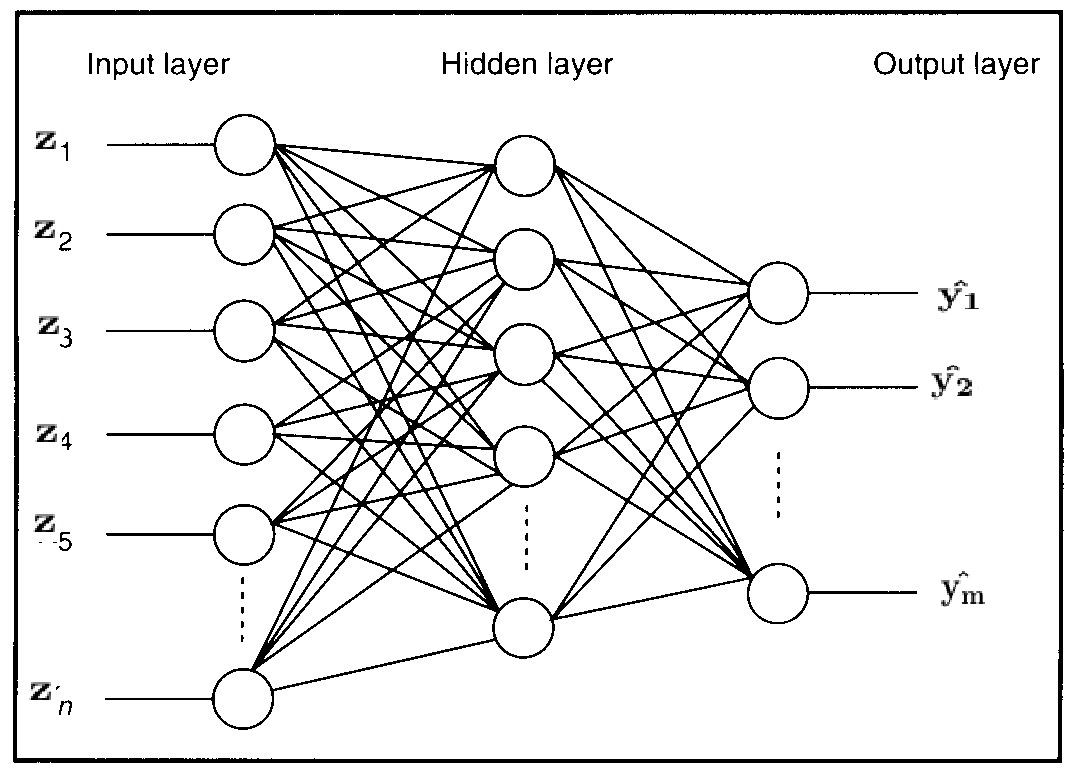}
        \caption{Schematic representation of the neural network structure.
         }
	\label{fig:NN_struct}
	\end{centering}
\end{figure}
%

 The training process of ANN adjusts the weights $w_{i,j}$ in order to reproduce the desired outputs when fed the given inputs. The training process via the back propagation algorithm \cite{rumelhart1985learning} uses a gradient descent method to modify weights and thresholds such that the error between the desired output and the output signal of the network is minimized \cite{funahashi1989approximate}.  In supervised learning the network is provided with samples from which it discovers the relations of inputs and outputs. The output of the network is compared with the desired output,and  the error is back-propagated through the network and the weights will be adjusted. This process is repeated during several epochs, until the network output is close to the desired output \cite{haykin2009neural}. In unsupervised learning a model of the data distribution is formed in order to extract significant data features.The network extracts data features without being shown a set of inputs and outputs \cite{jang1997neuro}.


\section{Combining Available Information for Accurate ROMs at New Parametric Configurations} \label{sec:data_comb}

The POD method produces an orthogonal basis that approximately spans the state solution space of the model for a specific parameter configuration. Moving away from the initial parametric configuration requires the construction of new bases since the initial reduced order model is not accurate anymore. However, if states depend continuously on parameters, the POD basis constructed for one parameter configuration can approximately span the solution space at different parametric settings in a local vicinity.

Several methods to combine the available information to generate more accurate reduced order models for new parameter configurations have been proposed in the literature. One is the interpolation of the available reduced order bases $U_{\mu_j}$ computed for the parametric configurations $\mu_j,~j=1,..,M$. The dependence of the bases on the parameters has been modeled with various linear and nonlinear spatially-dependent interpolants.

Here we compare the performances of different strategies that involve Lagrange interpolation of bases in the matrix space and in the tangent space of the Grassmann manifold. In addition we propose to concatenate the available reduced bases followed by an orthogonalization process, and to interpolate the solutions of the high fidelity model as a mean to derive the reduced order basis for a new parameter configuration.

\subsection{Basis interpolation} \label{sec:Bases_Interpolation}


\paragraph{Lagrange interpolation of bases} Assuming the reduced manifold ${\bf U}: \mathcal{\tilde P} \to \mathbb{R}^{\Ns \times k}$ poses a continuous and linear dependency with respect to the parametric space, and if $M$ discrete bases $U_{\mu_j}={\bf U}(\mu_j)$ have been already constructed for various parametric configurations $\mu_j,~j=1,2,..,M$, then a basis corresponding to the new configuration $\mu^0$ can be obtained using Lagrange's interpolation formula
\begin{eqnarray}
\label{eq:direct_Lagrange_interp}
&& U_{\mu^0} = \sum_{j=1}^M U_{\mu_j}L_j(\mu^0), \qquad
     L_j(\mu) = \prod_{i\neq j} {\frac{\mu-\mu_i}{\mu_j-\mu_i}}.
\end{eqnarray}
Drawbacks of this approach include the orthogonalization requirement for the resulting interpolated basis vectors, and the lack of linear variation in the angles between pairs of subspace planes \cite{lieu2004parameter} spanned by the reduced bases $U_{\mu_j}$. Differential geometry results can be employed to alleviate these deficiencies.

\paragraph{Grassmann manifold}


In the study proposed by Amsallem and Farhat \cite{Amsallem_2008} basis (matrix) interpolation in the tangent space of the Grassmann manifold at a careful selected point $S$ representing a subspace spanned by one of the available reduced bases was performed. It has been shown that Grassmann manifold can be endowed with a differentiable structure \cite{absil2004riemannian,Edelman98thegeometry}, i.e., at each point $S$ of the manifold a tangent space exists. The mapping from the manifold to the tangent space at $S$ is called the logarithmic mapping, while the backward projection is referred to as exponential mapping \cite{begelfor2006affine}. According to \cite{Amsallem_2008} the construction of a new subspace $S_{\mu^0}$ associated with a new parameter $\mu^0$ can be obtained by interpolating the known subspaces $\{S_i\}_{i=1}^{M  }$ spanned by the already computed reduced bases $U_{\mu_i},~i=1,..,M$.
The steps required by this methodology \cite{Amsallem_2008} are described in the Algorithm\ref{alg:Grassmann_manifold}.

\begin{algorithm}
 \begin{algorithmic}[1]

\State Select a point $S_{i_0}$ of the manifold to represent the origin point for the interpolation spanned by the basis $U_{\mu_{i_0}}$.
\State The tangent space $\mathcal{T}_{S_{i_0}}$ and the subspaces $\{S_i\}_{i=1}^{M}$ are considered. Each point $S_{i}$  sufficiently close to $S_{i_0}$
is mapped to a matrix $\Gamma_i$ representing a point of $\mathcal{T}_{S_{i_0}}$ using the logarithm map $\textnormal{Log}_{S_{i_0}}$ \cite{begelfor2006affine}
\begin{eqnarray*}
  (I - U_{\mu_{i_0}} U_{\mu_{i_0}}^T) U_{\mu_i} (U_{\mu_{i_0}}U_{\mu_i})^{-1} &=& R_i \Lambda_i Q_i^T~~ (\textrm{SVD factorization}),\\
  \Gamma_i &=& R_i \tan^{-1}(\Lambda_i) Q_i^T.
\end{eqnarray*}
\State Each entry of the matrix $\Gamma_0$ associated with the target parameter $\mu_0$ is computed by interpolating the corresponding entries of the matrices
$\Gamma_i \in \mathbb{R}^{N_{state} \times k}$ associated with the parameter points $\mu_i,~i=1,..,M-1$. A univariate or multivariate Lagrange interpolation
may be chosen similar with the one introduced in \eqref{eq:direct_Lagrange_interp}.

\State The matrix $\Gamma^0$ representing a point in the tangent space $\mathcal{T}_{S_{i_0}}$ is mapped to a subspace $S^0$ on the Grassmann manifold spanned by
a matrix $U_{\mu^0}$ using the exponential map \cite{begelfor2006affine}

\begin{eqnarray*}
  \Gamma^0 &=& R^0 \Lambda^0 {Q^0}^T~~(\textrm{SVD factorization}), \\
  U_{\mu^0} &=& U_{\mu_{i_0}} Q^0 \cos(\Lambda^0) + R^0\sin(\Lambda^0).
\end{eqnarray*}

\end{algorithmic}
 \caption{Interpolation in a tangent space to a Grassmann manifold algorithm \cite{Amsallem_2008}}
 \label{alg:Grassmann_manifold}
\end{algorithm}

Amsallem and Cortial \cite{Amsallem_Cortial_2007} proved that the subspace angle interpolation \cite{lieu2004parameter,lieu2005pod} is identical to the interpolation in a tangent space to the Grassmann manifold of two reduced-order bases, thus the latter methodology can be viewed as a generalization of the former approach.

\subsection{Basis concatenation}\label{sec:bases_concatenation}

Basis concatenation idea was introduced in \cite{weickum2006multi} and emerged from the notion of a global basis \cite{taylor2004towards,schmit2003improvements}. In the global strategy, the existent high-fidelity snapshots corresponding to various parameter configurations are collected in a single snapshot matrix and then a matrix factorization is performed to extract the most energetic singular vectors. This global basis is then used to build reduced order models for parameter values not available in the initial snapshots set.

Assuming $\mathbf{x}_{\mu_1},~\mathbf{x}_{\mu_2} \in \mathbb{R}^{\Ns\times\Nt}$ are the snapshots corresponding to two high-fidelity model trajectories, the following error estimate holds \cite[Proposition 2.16]{volkwein2007proper}:
\begin{subequations}
\label{eq:Global_POD}
\begin{eqnarray}
 && \bar{\mathbf{x}} = [\mathbf{x}_{\mu_1} \mathbf{x}_{\mu_2}] = {\bar U} \Lambda {\bar V}^T,  \textrm{  (SVD factorization)} \\
 && \| \bar{\mathbf{x}} - \bar U(:,1:k)\, {\bf \tilde x} \|_F = \sum_{i=k+1}^{\Nt} \lambda_i = \mathcal{O}(\lambda_{k+1}),
\end{eqnarray}
\end{subequations}
where $\lambda_i$ is the $i^{\rm{th}}$ singular value of $\bar{\mathbf{x}}$ , $\bar U(:,1:k)$ are the first $k$ singular vectors of $\bar U$ and ${\bf \tilde x} \in \mathbb{R}^{k \times 2\Nt}$. By $\| \cdot \|_F$ we refer to the Frobenius norm.
The snapshot matrix typically stores correlated data and therefore contains linearly dependent columns. For rank deficient snapshot matrices, and in the case where the reduced order bases $U_{\mu_1}$ and $U_{\mu_2}$ corresponding to the trajectories $\mu_1$ and $\mu_2$ are available, we can construct a basis $\hat{U}$ by simply concatenating columns of $U_{\mu_1}$ and $U_{\mu_2}$ such that the accuracy level in \eqref{eq:Global_POD} is preserved.

\begin{proposition}
Consider the following SVD expansions of rank deficient snapshots matrices $\mathbf{x}_{\mu_1},~\mathbf{x}_{\mu_2} \in \mathbb{R}^{\Ns\times\Nt}$
\begin{equation}
 \mathbf{x}_{\mu_j} = U_{\mu_j} \, \Lambda_j\,  V_{\mu_j}^T, \quad j=1,2. \label{eq:SVD_for_snap_matrices}
\end{equation}
There are positive integers $k_1,k_2$, and $\hat{\mathbf{x}} \in \mathbb{R}^{(k_1+k_2) \times 2\Nt}$, such that $\bar{\mathbf{x}}$ defined in \eqref{eq:Global_POD} satisfies
\begin{equation}
  \| \bar{\mathbf{x}} - \hat{U}\; \hat{\mathbf{x}} \|_F = \mathcal{O}(\lambda_{{k}+1}),
\end{equation}
where $\lambda_{k+1}$ is the $(k+1)$-st singular value of snapshots matrix ${\bf \bar x}$, and $\hat{U} = [U_{\mu_1}(:,1:k_1) ~ U_{\mu_2}(:,1:k_2)] \in \mathbb{R}^{\Ns \times (k_1+k_2)}$.

\end{proposition}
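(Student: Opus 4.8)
The plan is to exhibit explicit truncation levels $k_1,k_2$, to let $\hat{\mathbf{x}}$ encode the orthogonal projection of $\bar{\mathbf{x}}$ onto $\mathrm{range}(\hat U)$, and then to derive the estimate by a blockwise Eckart--Young argument. First I would use that the snapshot matrices are rank deficient: let $r_j=\mathrm{rank}(\mathbf{x}_{\mu_j})<\Nt$, so that the columns $U_{\mu_j}(:,1:r_j)$ form an orthonormal basis of $\mathrm{range}(\mathbf{x}_{\mu_j})$ and the singular values $\lambda^{(j)}_i$ of $\mathbf{x}_{\mu_j}$ vanish for $i>r_j$. Fix the target truncation index $k$, so that $\lambda_{k+1}$ (the $(k+1)$-st singular value of $\bar{\mathbf{x}}$) is the prescribed accuracy level appearing in \eqref{eq:Global_POD}, and choose $k_j$ to be the smallest positive integer with $\lambda^{(j)}_{k_j+1}\le\lambda_{k+1}$; such a $k_j\le r_j$ exists because $\lambda^{(j)}_{r_j+1}=0$, so that $k_1,k_2$ are determined by the desired tolerance rather than prescribed independently. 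Then I would set $P_j=U_{\mu_j}(:,1:k_j)\,U_{\mu_j}(:,1:k_j)^T$, $\hat U=[\,U_{\mu_1}(:,1:k_1)\ \ U_{\mu_2}(:,1:k_2)\,]\in\mathbb{R}^{\Ns\times(k_1+k_2)}$, and take $\hat{\mathbf{x}}\in\mathbb{R}^{(k_1+k_2)\times 2\Nt}$ to be any matrix of coordinates of $Q\bar{\mathbf{x}}$ with respect to the columns of $\hat U$, where $Q$ is the orthogonal projector onto $\mathrm{range}(\hat U)=\mathrm{range}(P_1)+\mathrm{range}(P_2)$; such coordinates exist because every column of $Q\bar{\mathbf{x}}$ lies in $\mathrm{range}(\hat U)$, and one concrete choice is $\hat{\mathbf{x}}=\hat U^{+}\bar{\mathbf{x}}$ with $\hat U^{+}$ the Moore--Penrose pseudoinverse.

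The key reduction is that, by this choice, $\hat U\hat{\mathbf{x}}=Q\bar{\mathbf{x}}$, and using the block structure $\bar{\mathbf{x}}=[\mathbf{x}_{\mu_1}\ \mathbf{x}_{\mu_2}]$ together with additivity of the squared Frobenius norm over column blocks, $\|\bar{\mathbf{x}}-\hat U\hat{\mathbf{x}}\|_F^2=\|(\I-Q)\mathbf{x}_{\mu_1}\|_F^2+\|(\I-Q)\mathbf{x}_{\mu_2}\|_F^2$. Since $\mathrm{range}(P_j)\subseteq\mathrm{range}(Q)$ we have $QP_j=P_j$, hence $(\I-Q)(\I-P_j)=\I-Q$, so $\|(\I-Q)\mathbf{x}_{\mu_j}\|_F\le\|\I-Q\|_2\,\|(\I-P_j)\mathbf{x}_{\mu_j}\|_F\le\|(\I-P_j)\mathbf{x}_{\mu_j}\|_F$, because $\I-Q$ is an orthogonal projector and thus has spectral norm at most one. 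Finally $P_j$ is precisely the projector onto the leading $k_j$ left singular vectors of $\mathbf{x}_{\mu_j}$, so the Eckart--Young identity gives $\|(\I-P_j)\mathbf{x}_{\mu_j}\|_F^2=\sum_{i>k_j}(\lambda^{(j)}_i)^2=\sum_{i=k_j+1}^{r_j}(\lambda^{(j)}_i)^2$.

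Combining these, $\|\bar{\mathbf{x}}-\hat U\hat{\mathbf{x}}\|_F^2\le\sum_{j=1,2}\sum_{i=k_j+1}^{r_j}(\lambda^{(j)}_i)^2\le\sum_{j=1,2}(r_j-k_j)\,(\lambda^{(j)}_{k_j+1})^2\le 2\Nt\,\lambda_{k+1}^2$, whence $\|\bar{\mathbf{x}}-\hat U\hat{\mathbf{x}}\|_F\le\sqrt{2\Nt}\,\lambda_{k+1}=\mathcal{O}(\lambda_{k+1})$, in the same big-$\mathcal{O}$ sense (dimensions treated as constants) already used in \eqref{eq:Global_POD}. This gives the claimed $\hat U$ and $\hat{\mathbf{x}}$ with $k_1,k_2$ positive integers.

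I expect the only genuine obstacle to be the orthogonality bookkeeping in the middle step. The matrix $\hat U$ is a concatenation of two individually orthonormal blocks that are in general neither mutually orthogonal nor jointly independent, so $\hat U$ may itself be rank deficient and $\hat U^{T}\bar{\mathbf{x}}$ is \emph{not} the right set of coordinates; one must instead project properly, i.e. work with $Q=\hat U\hat U^{+}$, and then use the projector identity $(\I-Q)(\I-P_j)=\I-Q$ to dominate the concatenated residual by each single-basis truncation residual. Once that is set up, the remainder is the standard singular-value tail estimate applied block by block, and, if desired, $\hat U$ can afterwards be compressed (e.g.\ by a thin QR of its $k_1+k_2$ columns) without changing $\mathrm{range}(\hat U)$ or the bound.
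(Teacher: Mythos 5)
Your argument is correct, and it shares the paper's central idea: exploit rank deficiency to pick truncation levels $k_1,k_2$ whose discarded singular values are dominated by $\lambda_{k+1}$, and then reduce the error of the concatenated basis to the two local truncation residuals, column block by column block. Where you differ is in the construction of $\hat{\mathbf{x}}$ and the bounding mechanism. The paper exhibits $\hat{\mathbf{x}}$ explicitly as the block-diagonal matrix built from the local SVD coordinates $\tilde{\mathbf{x}}_1,\tilde{\mathbf{x}}_2$, so that $\bar{\mathbf{x}}-\hat{U}\hat{\mathbf{x}}$ is exactly $[\mathbf{x}_{\mu_1}-U_{\mu_1}(:,1:k_1)\tilde{\mathbf{x}}_1 \;\; \mathbf{x}_{\mu_2}-U_{\mu_2}(:,1:k_2)\tilde{\mathbf{x}}_2]$, and concludes by the triangle inequality together with the cited truncated-SVD estimate; it also chooses $k_j$ at (essentially) the ranks, so $\lambda^j_{k_j+1}$ is forced below $\lambda_{k+1}/2$ uniformly in $k$, whereas you fix $k$ first and let $k_1,k_2$ depend on it, which matches a ``for a given $k$'' reading of the statement. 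You instead take the least-squares coordinates $\hat{\mathbf{x}}=\hat{U}^{+}\bar{\mathbf{x}}$, so $\hat{U}\hat{\mathbf{x}}=Q\bar{\mathbf{x}}$ with $Q$ the orthogonal projector onto $\mathrm{range}(\hat{U})$, and then use the projector identity $(\I-Q)(\I-P_j)=\I-Q$ plus the Eckart--Young tail identity to dominate each block residual. This buys you something the paper's proof does not make explicit: your $\hat{\mathbf{x}}$ is the optimal coordinate choice for the given $\hat{U}$, your treatment is indifferent to the fact that the two orthonormal blocks are neither mutually orthogonal nor jointly independent (so $\hat{U}$ may be rank deficient), and you obtain an explicit constant $\sqrt{2\Nt}$ rather than an unquantified $\mathcal{O}$; the paper's route, in exchange, is more elementary and produces a concrete, cheaply computable $\hat{\mathbf{x}}$ without pseudoinverses. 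Both proofs use the big-$\mathcal{O}$ in the same loose sense, treating $\Nt$ as a constant.
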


\begin{proof}
Since $\mathbf{x}_{\mu_1},~\mathbf{x}_{\mu_2} \in \mathbb{R}^{\Ns\times\Nt}$ are rank deficient matrices, there exist at least two positive integers $k_1$ and $k_2$, such that the singular values associated with $\mathbf{x}_{\mu_1}$ and $\mathbf{x}_{\mu_2}$ satisfy
\begin{equation}
\lambda^1_{k_1+1}, \lambda^2_{k_2+1} \leq \frac{\lambda_{k+1}}{2},~\forall k=0,..,Nt-1.
\end{equation}
Next, from \cite[Proposition 2.16]{volkwein2007proper} and \eqref{eq:SVD_for_snap_matrices} we have the following estimates:
\begin{equation} \label{eq:local_SVD}
   \| \mathbf{x}_{\mu_j} - U_{\mu_j}(:,1:k_j)\, {\bf \tilde x_j} \|_F = \sum_{i=k_j+1}^{\Nt} \lambda_i^j = \mathcal{O}(\lambda_{k_j+1}^j),
\end{equation}
where $\lambda_i^j$ is the $i^{\textrm{th}}$ singular value of $\mathbf{x}_{\mu_j}$ and ${\bf \tilde x_j} \in \mathbb{R}^{k_j \times \Nt}$, for $j=1,2$.

By denoting %
\[
\hat{\mathbf{x}} = \begin{bmatrix}
{\bf \tilde x}_1 \quad {\bf 0}_1 \\
{\bf 0}_2 \quad {\bf \tilde x}_2 \end{bmatrix},
\]
where the null matrix ${\bf 0}_j$ belongs to $\mathbb{R}^{k_j \times \Nt},~j=1,2$, we have
\begin{eqnarray}
&& \| {\bf \bar x} - \hat{U}\, \hat{\mathbf{x}} \|_F = \| [\mathbf{x}_{\mu_1}~~~ \mathbf{x}_{\mu_2}] - [U_{\mu_1}(:,1:k_1){\bf \tilde x}_1~~~ U_{\mu_2}(:,1:k_2){\bf \tilde x}_2]\|_F \leq  \\
&& \| {\bf \bar x}_{\mu_1} - U_{\mu_1}(:,1:k_1){\bf \tilde x}_1\|_F + \| {\bf \bar x}_{\mu_2} - U_{\mu_2}(:,1:k_2){\bf \tilde x}_2\|_F \leq \mathcal{O}(\lambda_{k_1+1}^1) + \mathcal{O}(\lambda_{k_2+1}^2) = \mathcal{O}(\lambda_{k+1}).
\end{eqnarray}
\end{proof}

It is crucial for the proof that $U_{\mu_1}$ and $U_{\mu_2}$ are rank deficient since they have at least one null singular value. In practice usually $k_1+k_2$ is larger than $k$ thus more bases functions are required to form $\hat{U}$ to achieve a similar level of precision as in \eqref{eq:Global_POD} where $\bar U$ is built using a global singular value decomposition. This matrix factorization of $\bar{\mathbf{x}}$ is more costly than both singular value decompositions of matrices $\mathbf{x}_{\mu_j},~j=1,2,$ \eqref{eq:local_SVD} combined for large space dimension $\Ns$. However the off-line stage of the concatenation method also includes the application of a  Gramm-Schmidt-type algorithm to orthogonalize  the overall set of vectors in $\hat{U}$.

For full rank matrices the precision is controlled by the spectra of snapshots matrices $U_{\mu_1}$ and $U_{\mu_2}$ but there is no guarantee that the concatenated basis $\hat{U}$ can provide similar accuracy precision as $\bar U$ in \eqref{eq:Global_POD} for all $k=1,2,..,Nt$.

While the Lagrange interpolation of bases mixes the different energetic singular vectors in an order dictated by the singular values magnitude, this strategy concatenates the dominant singular vectors for each case and preserves their structure.

\subsection{Interpolation of high-fidelity model solutions}

The method discussed herein assumes that the model solution dependents continuously on the parameters. Thus it is natural to consider constructing the basis for a new parameter configuration by interpolating the existent high-fidelity model solutions associated with various parameter settings, and then performing a SVD factorization of the interpolated results. For example, the Lagrange solution interpolant is given by
\begin{eqnarray}
\label{eq:Lagrange_sol_interp}
  \mathbf{x}_{\mu^0} = \sum_{j=1}^M \mathbf{x}_{\mu_j}L_j(\mu^0),
\end{eqnarray}
where $\mathbf{x}_{\mu_j} \in \mathbb{R}^{\Ns\times \Nt}$ is the model solution corresponding to parameter $\mu_j$ and the interpolation polynomials are defined in \eqref{eq:direct_Lagrange_interp}.

A new basis is constructed from the interpolated model solution  \eqref{eq:Lagrange_sol_interp} for the new parametric configuration $\mu_0$. By integrating the corresponding reduced order model the output projected solution will present variations in comparison with the high-fidelity interpolation solution $\mathbf{x}_{\mu^0}$ thus the nonlinear dynamics of the model influence the final solution too.

From computational point of view the complexity of the off-line stage of the solution interpolation method \eqref{eq:Lagrange_sol_interp} is smaller than in the case of the bases concatenation and interpolation approaches. Only one singular value decomposition is required in contrast with the multiple factorizations needed in the latter two strategies where the involved matrices have the same size $\Ns \times \Nt$. Having only $\Nt$ snapshots the size of the outcome basis should be smaller than in the case of basis concatenation approach.

If for each model solution $\mathbf{x}_{\mu_j}$ a singular value decomposition is available such that
\begin{equation}\label{eq:SVD_sim}
   \mathbf{x}_{\mu_j} \approx U_{\mu_j}\, {\bf \tilde x_j},~j=1,2,..,M,
\end{equation}
then from \eqref{eq:Lagrange_sol_interp} we get
\begin{equation}\label{eq:SIM_comp}
   \mathbf{x}_{\mu^0} \approx \sum_{j=1}^M L_j(\mu^0)U_{\mu_j}{\bf \tilde x}_j.
\end{equation}

Now the basis $U_{\mu_0}$ associated with the new configuration is obtained following a matrix factorization, i.e..
\begin{equation}\label{eq:SIM_basis_formula}
   \sum_{j=1}^M L_j(\mu^0)U_{\mu_j}{\bf \tilde x}_j = U_{\mu_0} S_{\mu_0} V^T_{\mu_0}.
\end{equation}

In the basis interpolation method \eqref{eq:direct_Lagrange_interp}, the basis ${U}_{\mu_0}$ (denoted in \eqref{eq:BIM_basis_formula} by ${\bar U}_{\mu_0}$ to differentiate it from the basis described in equation \eqref{eq:SIM_basis_formula}) corresponding to the new parametric setting $\mu_0$ is derived from the following
\begin{equation}\label{eq:BIM_basis_formula}
   \sum_{j=1}^M L_j(\mu^0)U_{\mu_j} = {\bar U}_{\mu_0} \bar S_{\mu_0} \bar V^T_{\mu_0}.
\end{equation}

While there is a close relationship between the philosophy behind the solution interpolation method and the bases interpolation strategy, the algebraic formulations of the bases $U_{\mu_0}$ \eqref{eq:SIM_basis_formula} and ${\bar U}_{\mu_0}$ \eqref{eq:BIM_basis_formula} are significantly different. Consequently an assumption of linear variation of the basis over the parametric interval does not imply that the high-fidelity solution varies linear over the parametric domain. The inverse proposition does not hold neither thus the choice of method depends on the model under study and its input.

\section{Numerical Experiments}
\label{sect:experm}

We illustrate the application of the proposed machine learning methodologies to the construction of error models  for the reduced order models solutions for a one-dimensional Burgers model and their subsequent utilizations. The model proposed herein is characterized by two scalar parameters, but the envisioned parametric map is designed to cover variation in the viscosity coefficient space only. For each of the problems introduced in Section \ref{sect:problem_Des} (constructing the parametric map, selecting the best two already existing bases for a new parameter configuration from accuracy point of view, and selecting the dimension of the reduced basis) we present in detail the proposed solution approaches and the corresponding numerical results.

To assess the performance of  probabilistic models we employ various cross validation tests. The dimensions of the training and testing data sets are chosen empirically based on the number of samples. For artificial neural network models the number of hidden layers and neurons in each hidden layer vary for each type of problems under study. The squared-exponential-covariance kernel \eqref{eq_cov} is used for Gaussian process models.

\subsection{One-dimensional Burgers' equation}\label{subsec:Burgers}
\ifx
Here we propose two alternative approaches to select the reduced basis size that accounts for specified accuracy levels in the reduced order model solutions.
Assume we have a probability space $(\Omega, \mathcal{F},\mathcal{P})$. These techniques employ construction of probabilistic models via neural network and Gaussian
process, $ \phi: X \rightarrow \hat{y} $ where $\phi$ is the transformation function that learns through the input features $X$ to estimate the deterministic
output $y \in \mathbb{R}$ through a real-valued random variable $\hat{y} : \Omega \to \mathbb{R}$.
\fi

Burgers' equation is an important partial differential equation from fluid mechanics \cite{burgers1948mathematical}. The evolution of the velocity $u$ of a fluid evolves according to
\begin{equation}
\frac{\partial u}{\partial t} + \nu u\frac{\partial u}{\partial x} = \mu \frac{\partial^2 u}{\partial x^2}, \quad x \in [0,L], \quad t \in (0,t_\textnormal{f}],\label{eqn:Burgers-pde}
\end{equation}
with $t_\textnormal{f} = 1$ and $L=1$. Here $\mu$ is the viscosity coefficient.The parameter $\nu$ has no physical significance and it is used to control the non-linear effect of the advection term. The model has homogeneous Dirichlet boundary conditions $u(0,t) = u(L,t) = 0$, $t \in (0,t_\textnormal{f}]$. A smooth initial condition is used, described by a seventh degree polynomial and shown in Figure \ref{Fig::1D-Burgers-IC}.


The discretization uses a spatial mesh of $N_s=201$ equidistant points on $[0,L]$, with $\Delta x=L/(N_s-1)$. A uniform temporal mesh with $N_t=301$ points  covers the interval $[0,t_\textnormal{f}]$, with $\Delta t=t_\textnormal{f}/(N_t-1)$. The discrete velocity vector is ${\boldsymbol u}(t_j)\approx [u(x_i,t_j)]_{i=1,2,..,\Ns} \in \mathbb{R}^{\Ns}$, $N=j=1,2,..N_t$, where $\Ns=N_s-2$  (the known boundaries are removed). The semi-discrete version of the model \eqref{eqn:Burgers-pde} is:

\begin{equation}\label{eqn:Burgers-sd}
 {\bf u}'  =  -\nu {\bf u}\odot A_x{\boldsymbol u} + \mu A_{xx}{\boldsymbol u},
\end{equation}
where ${\bf u}'$ is the time derivative of ${\bf u}$, and $A_x,A_{xx}\in \mathbb{R}^{\Ns\times \Ns}$ are the central difference first-order and second-order space derivative operators, respectively, which take into account the boundary conditions too.  The model is implemented in Matlab and the backward Euler method is employed for time discretization. The nonlinear algebraic systems are solved using Newton-Raphson method and the allowed number of Newton iterations per each time step is set to $50$. The solution is considered accurate enough when the euclidian norm of the residual is less then $10^{-10}$.

 The viscosity parameter space is set to the interval $[0.01,1]$. Smaller values of $\mu$ correspond to sharper gradients in the solution, and lead to a dynamics that is more difficult to approximate by a reduced order model.

\begin{figure}[h]
  \centering
\includegraphics[scale=0.37]{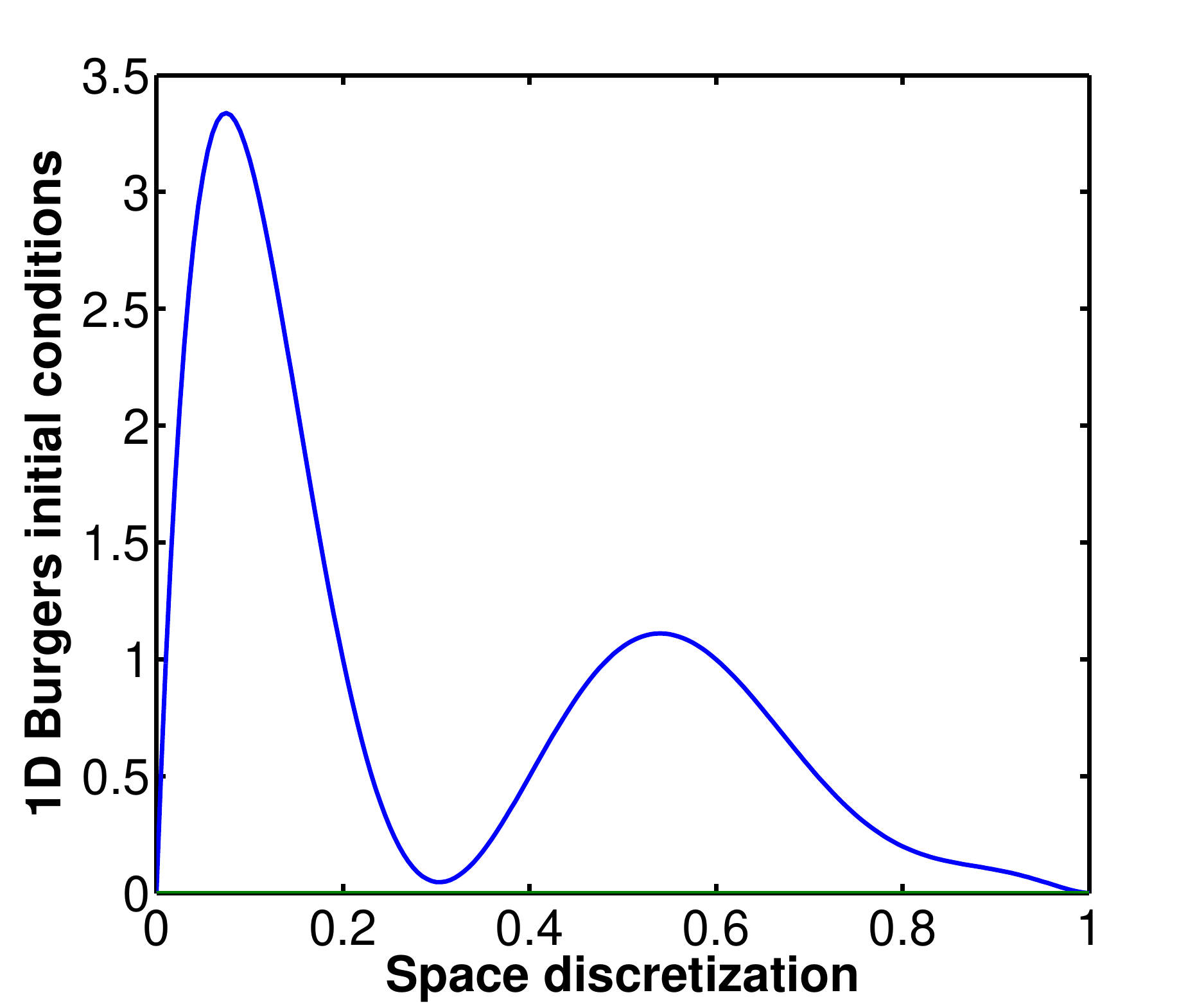}
\caption{Seventh order polynomial used as initial conditions for 1D Burgers model.
\label{Fig::1D-Burgers-IC}}
\end{figure}


\subsection{Designing the parametric map}\label{sec:parametric_map}

We seek to build a parameter map for the 1D-Burgers model for the viscosity domain consisting in the interval $[0.01,1]$. The non-physical parameter $\nu$ is set to $1$. As discussed in Section \ref{sect:problem_Des}, we take the following steps. First we construct probabilistic models to approximate the error of reduced order solution. Next, we identify ``$\mu$-feasible'' intervals $[d_\ell,d_r]$  in the parameter space such that local reduced order model depending only on the high-fidelity trajectory at $\mu$ is accurate within the prescribed threshold for any $\mu^0 \in [d_\ell,d_r]$. Finally, a greedy algorithm generates the parametric map by covering the parameter space with a union of $\mu_i$ feasible intervals.
\begin{equation}
  [0.01,1] \subset \bigcup_{i=1}^M\, \left[d_\ell^i,d_r^i\right],
\end{equation}
where each $\mu_i$-feasible interval is characterized by an error threshold $\bar \varepsilon_i$ (which can vary from one interval to another). This relaxation is suggested since for intervals associated with small parameters $\mu$, it is difficult to achieve small reduced 
order models errors similar to those obtained for larger parametric configurations. One way to mantain the error thresholds constant is to start with a larger proposal.

In existing reduced basis methods a global reduced order model depending on multiple high-fidelity trajectories is constructed. In contrast, our approach decomposes the parameter space into smaller regions where the local reduced order model solutions are accurate within some  tolerance levels. Since the local bases required for the construction of the local reduced order models depend on only a single full simulation, the size of the reduced manifolds is small, leading to lower on-line computational complexity.

\subsubsection{Error estimation of ROM solutions }
\label{sect:err_estimate}


 We first construct probabilistic models
\begin{equation}\label{eqn:prob_model_no_scale}
  \phi: {\bf z} \rightarrow \hat{\varepsilon},
\end{equation}
 where the input features ${\bf z}$ include a new viscosity parameter value $\mu^0$, a parameter value $\mu$ associated with the full model run that generated the basis $U_{\mu}$, and the dimension of the reduced manifold. The target $\hat{\varepsilon}$ is the estimated error of the reduced order model solution at $\mu^0$ using the basis $U_{\mu}$ and the corresponding reduced operators computed using the Frobenius norm

\begin{equation} \label{eqn:param_rang_err}
\varepsilon = \| [{\bf x}(\mu^0,t_i)]_{i=1,..,N_t} - U_{\mu}[{\bf \tilde x}(\mu^0,\mu,t_i)]_{i=1,..,N_t} \|_F.
\end{equation}

The training data set includes equally distributed values of $\mu$ and $\mu^0$ over the entire interval, $\mu \in \{0.1,0.2,\dots,0.9,1\}$ and $\mu^0 \in \{ 0.01, 0.02, \dots, 0.99, 1 \}$, respectively, reduced basis dimensions spanning
 the interval $[4,5,\dots,14,15]$ and the reduced order model error $\varepsilon$. The entire training data set contains nearly $12,000$ samples, and for each sample a high-fidelity model solution is calculated. Figure \ref{fig:parameter_contour}(b) shows isocontours of the reduced order model error $\varepsilon$ for viscosity parameter values $\mu^0$ and various POD basis dimensions. The design of the reduced order models relies on the high-fidelity trajectory for $\mu=0.8$.
 Since target values $\varepsilon$ vary over a wide range (from  300 to $ 10 ^{-6}$) we consider the logarithms of the errors $\log({\varepsilon})$ to decrease the variance of the predicted results, i.e.
 \begin{equation}\label{eqn:prob_model_scale}
  \phi: {\bf z} \rightarrow \widehat{\log(\varepsilon)}.
\end{equation}
 Figure \ref{fig:parameter_contour_log} shows isocontours for the logarithms of the errors  $\log({\varepsilon})$.
\begin{figure}[h]
  \centering
  \subfigure[Isocontours for the logarithms of the errors $\log({\varepsilon})$ ] {\includegraphics[scale=0.3]{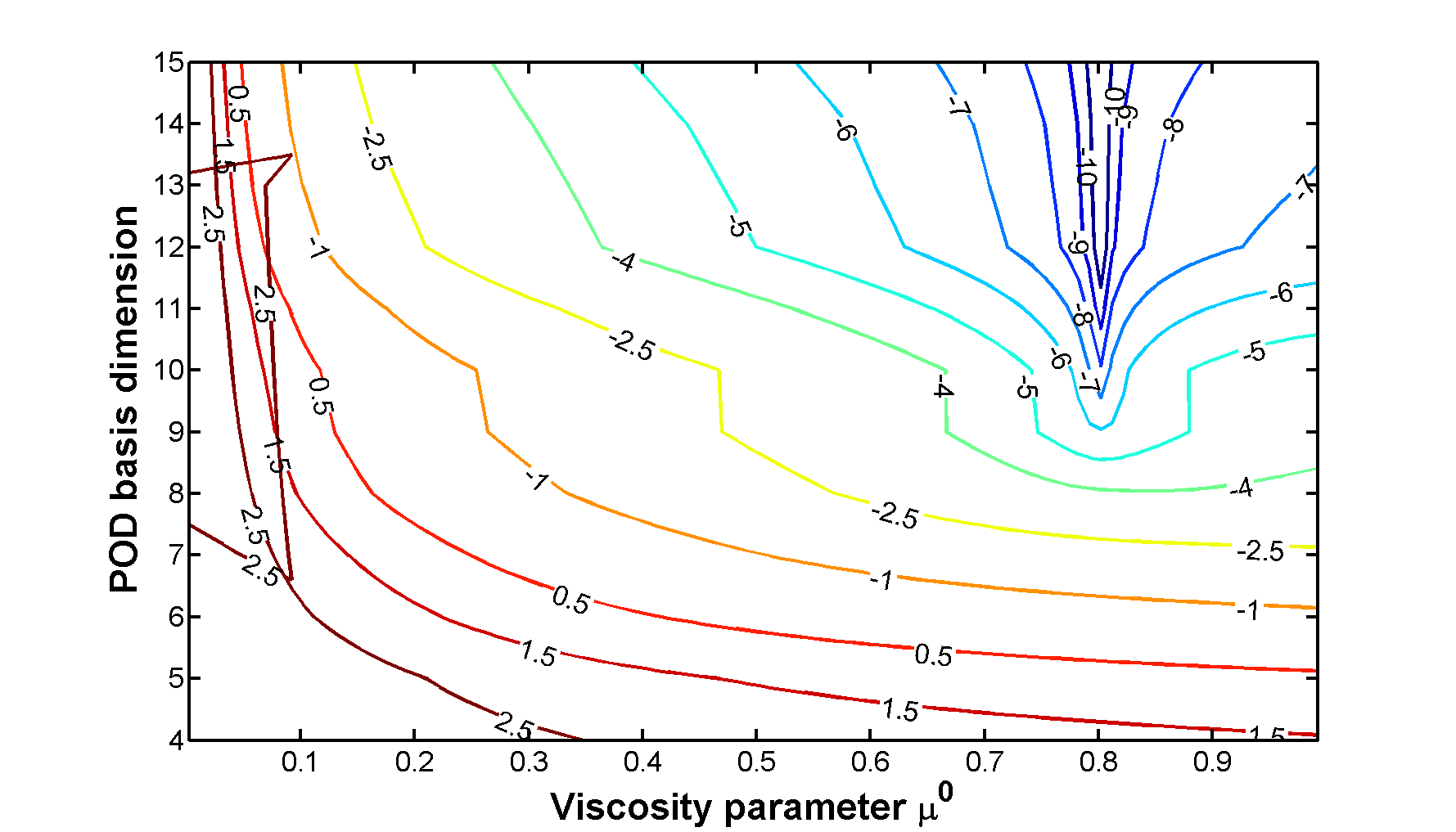}
    \label{fig:parameter_contour_log}}
  \subfigure[Isocontours of the errors ${\varepsilon}$]{\includegraphics[scale=0.3]{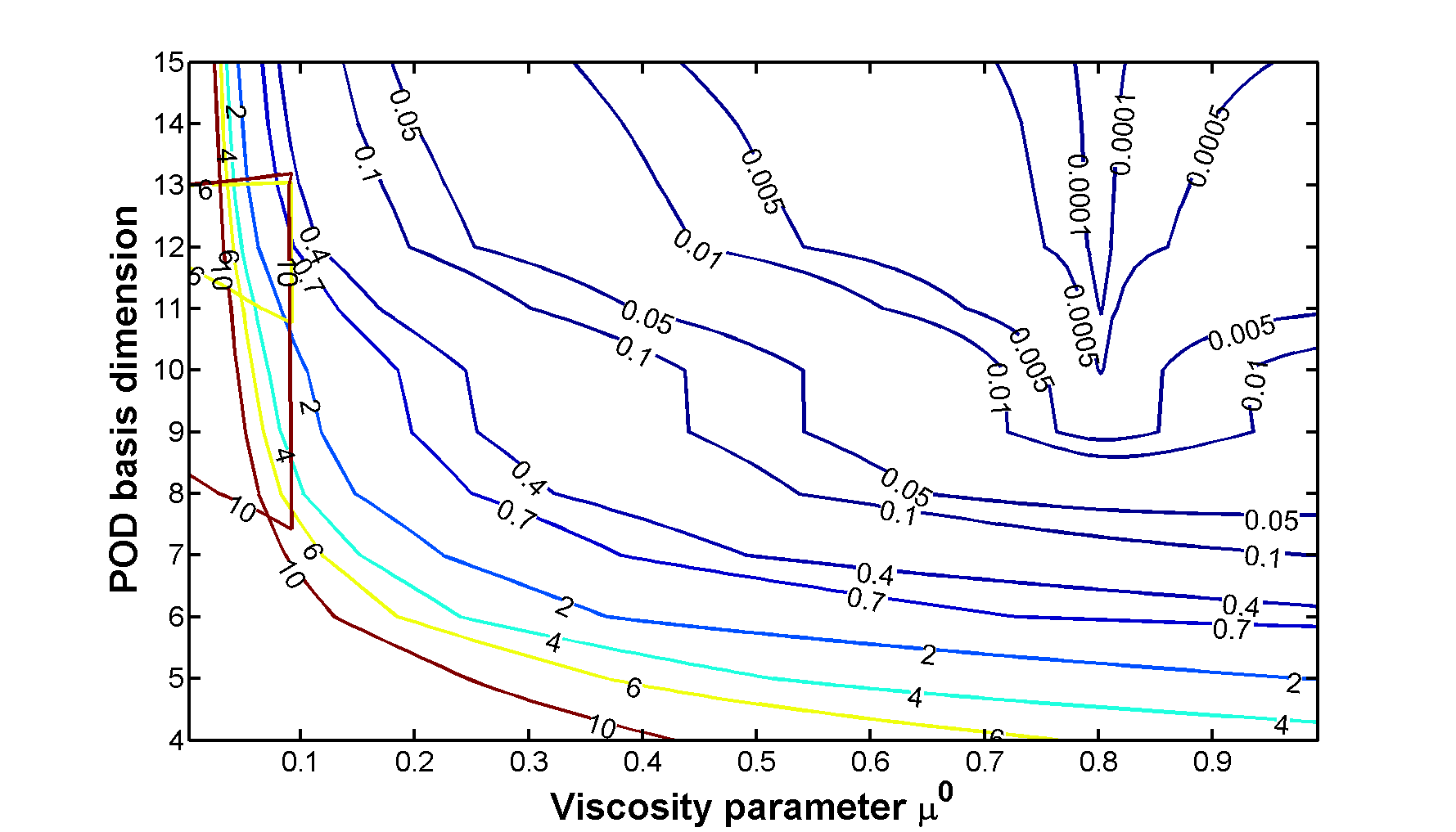}
    \label{fig:parameter_contour_lin}}
\caption{Isocontours of the reduced model errors for different POD basis dimensions and parameters $\mu^0$. The reduced order model uses a basis constructed from the full order simulation with parameter value $\mu=0.8$.}
\label{fig:parameter_contour}
\end{figure}

We construct two probabilistic models for estimating the ROM model errors, the first one uses a Gaussian process with a squared-exponential covariance kernel \eqref{eq_cov} and the second one uses a neural network with six hidden layers and hyperbolic tangent sigmoid activation function in each layer. Tables \ref{tab:Param_log} and \ref{tab:Param_lin} show the averages and variances of errors in prediction provided by GP and ANN for different sample sizes. The results are obtained using a conventional validation with $80\% $ of the whole data set involved in training process and the remaining $20\% $ employed for testing. The misfit is computed using the same formulas presented in \eqref{eqn:err_fold} to evaluate the prediction errors of one-fold set in the K-fold cross validation approach. Table \ref{tab:Param_lin} shows the prediction errors of \eqref{eqn:prob_model_no_scale} computed via equation \eqref{eqn:err_fold} with ${y} = {\varepsilon}$ and $\hat{y} = \hat{\varepsilon}$, i.e. no data scaling; the predictions have a large variance and  a low accuracy. Scaling the data and targeting $\log({\varepsilon})$ results using \eqref{eqn:prob_model_scale}, reduce the variance of the predictions, and increase the accuracy, as shown in Table \ref{tab:Param_log}. The same formula \eqref{eqn:err_fold} with ${y} = {\log(\varepsilon)}$ and $\hat{y} = \widehat{\log(\varepsilon)}$ was applied. In both cases ANN outperforms the GP. Moreover, as the number of data points grows, the accuracy increases and the variance decreases faster for ANN.

%
\begin{table}[H]
\begin{center}
    \begin{tabular}{ | l | l | l |  l | l |}
    \hline
     & \multicolumn{2}{|c|}{GP} & \multicolumn{2}{|c|}{ANN} \\
 \hline
Sample size &  $\textnormal{E}_{\rm fold}$   &  $\textnormal{VAR}_{\rm fold}$    & $\textnormal{E}_{\rm fold}$   &  $\textnormal{VAR}_{\rm fold}$
     \\ \hline
 100 & $ 0.5076 $ & $0.3870$ & $ 0.7237$ & $ 0.9974 $
     \\ \hline
 1000 & $0.2352$ & $ 0.0746 $ & $ 0.0650 $ & $ 0.0397 $
 \\ \hline
 3000 & $ 0.1555 $ & $ 0.0517 $ & $ 0.0110 $ & $ 0.0063 $
 \\ \hline
 5000 & $ 0.0810 $ & $ 0.0176 $ & $ 0.0090 $ & $ 0.0006 $
 \\ \hline
     \end{tabular}
\end{center}
 \caption{Average and variance of error in predictions of \eqref{eqn:prob_model_scale} for ANN and GP using logarithms of errors ($\log({\varepsilon})$) in training data for different sample sizes \label{tab:Param_log}}
  
\end{table}
%
%
\begin{table}[H]
\begin{center}
    \begin{tabular}{ | l | l | l |  l | l |}
    \hline
     & \multicolumn{2}{|c|}{GP} & \multicolumn{2}{|c|}{ANN} \\
 \hline
Sample size &  $\textnormal{E}_{\rm fold}$   &  $\textnormal{VAR}_{\rm fold}$    & $\textnormal{E}_{\rm fold}$   &  $\textnormal{VAR}_{\rm fold}$
 \\ \hline
 100 & $ 8.1523 $ & $ 709.16  $ & $ 9.2341 $ & $ 536.71 $
 \\ \hline
 1000 & $7.9558 $ & $ 593.36 $ & $ 4.4000  $ & $ 424.2900  $
 \\ \hline
 3000 & $6.7521 $ & $ 524.98 $ & $ 4.2238 $ & $  17.6300 $
 \\ \hline
 5000 & $ 2.8229 $ & $ 12.63 $ & $ 2.8134 $ & $ 8.9430 $
 \\ \hline
     \end{tabular}
\end{center}
  \caption{Average and variance of error in predictions of \eqref{eqn:prob_model_no_scale} for ANN and GP using raw errors ( ${\varepsilon}$ ) in training data for different sample sizes \label{tab:Param_lin}}
  
\end{table}

 Figures \ref{fig:ParamHist_NN} and \ref{fig:ParamHist_GP} show the corresponding histogram of the predicted models errors \eqref{eqn:prob_model_scale} and \eqref{eqn:prob_model_no_scale}  using $100$ and $1000$ training samples for both ANN and GP models. 
As the number of training samples increase, the uncertainty in the prediction decreases. The histograms can also asses the validity of GP assumptions \eqref{GP_prior}, \eqref{GP_Dist}, \eqref{GP_training}. The difference between the true and estimated values should behave as samples from the distribution $ \mathcal{N} (0, \sigma_n^2) $ \cite{drohmann2015romes}. In our case they are hardly normally distributed and this indicates that the data set for the problems we are working with, are not from Gaussian distributions.
%
\begin{figure}[h]
  \centering
  \subfigure[Prediction errors $\log(\varepsilon) - \widehat{\log(\varepsilon)}$ - 100 samples] {\includegraphics[scale=0.4]{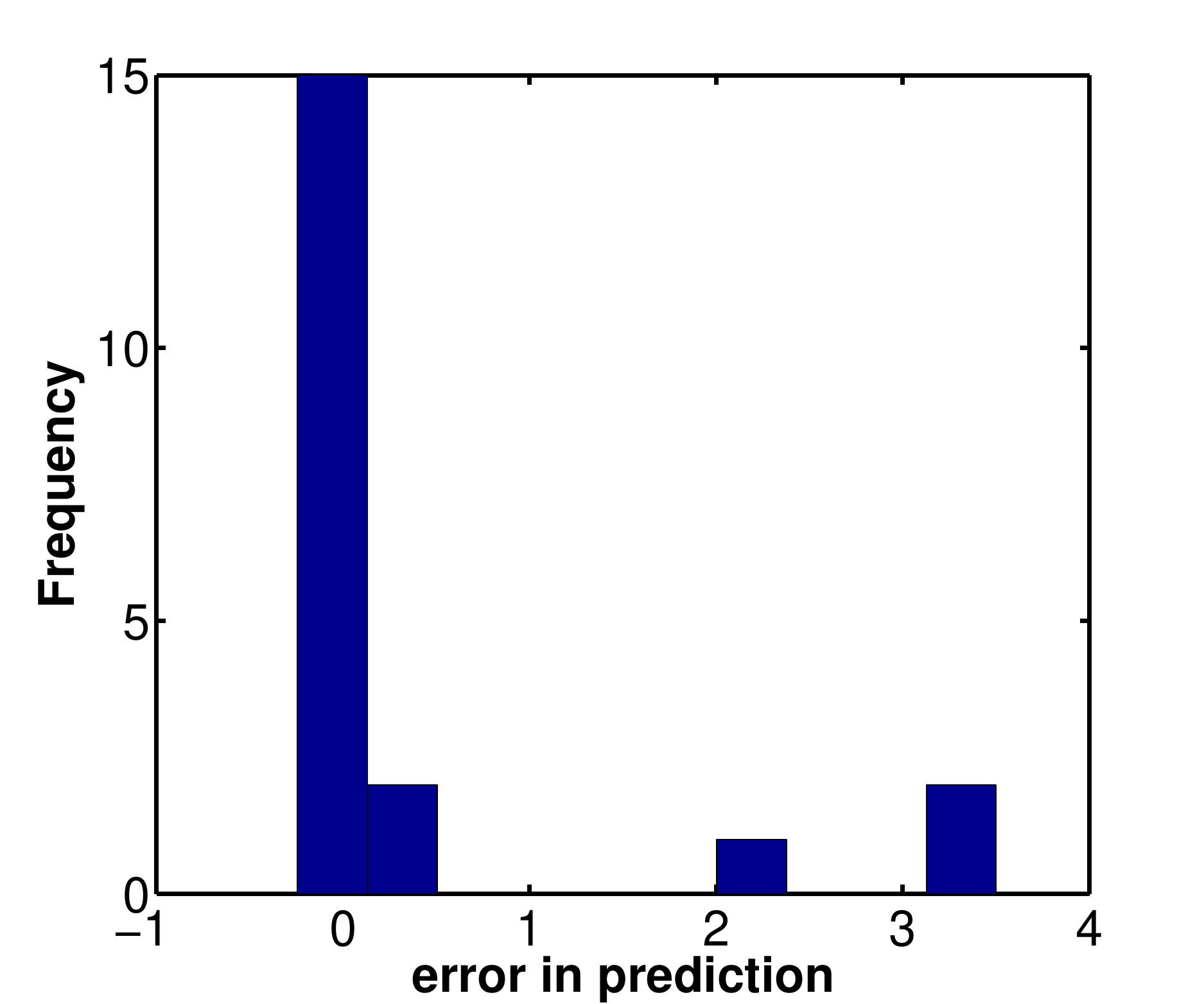}}
  \subfigure[Prediction errors $\varepsilon - \hat{\varepsilon}$ - 100 samples]{\includegraphics[scale=0.4]{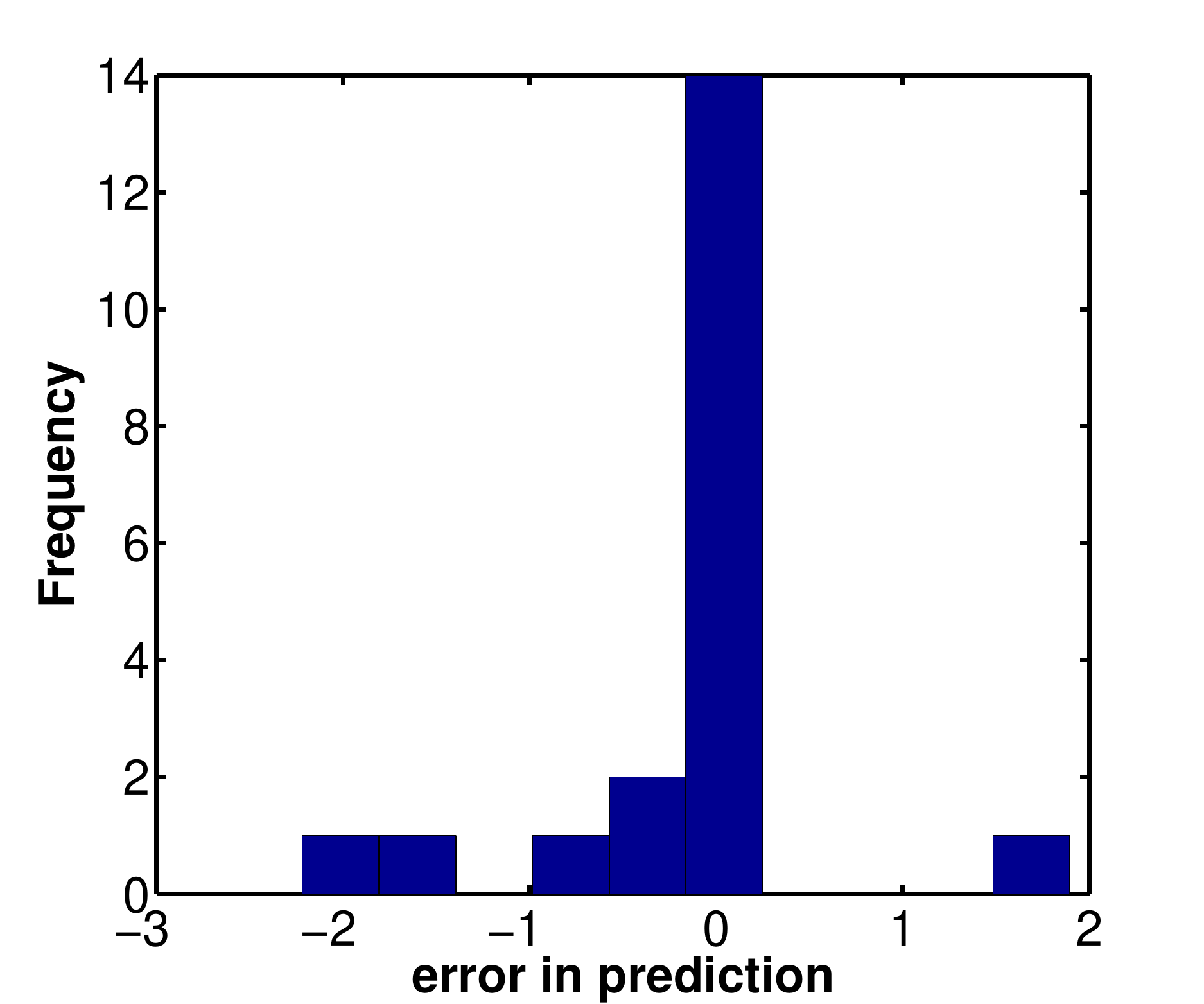}}
  \subfigure[Prediction errors $\log(\varepsilon) - \widehat{\log(\varepsilon)}$ - 1000 samples] {\includegraphics[scale=0.4]{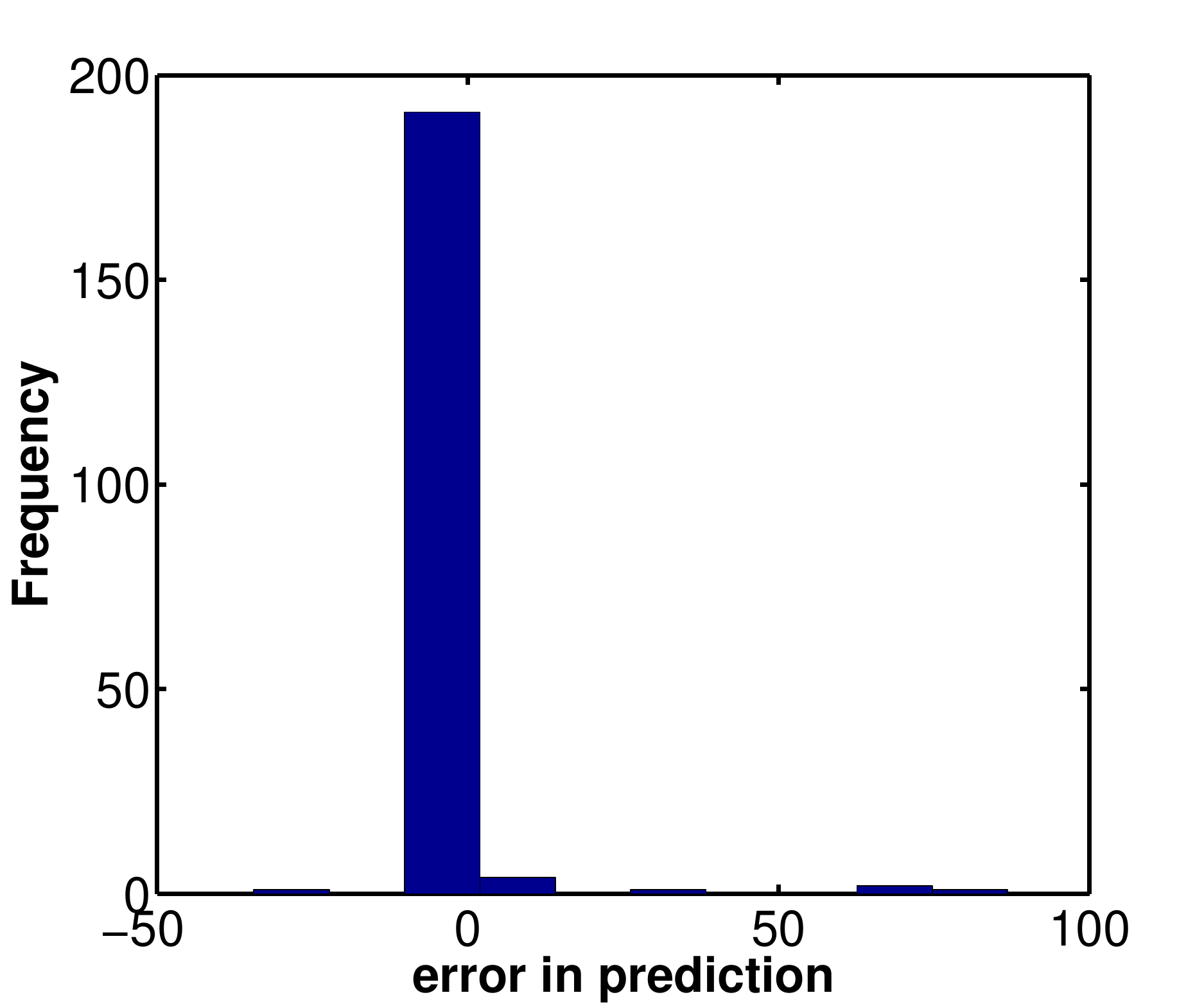}}
  \subfigure[Prediction errors $\varepsilon - \hat{\varepsilon}$ - 1000 samples]{\includegraphics[scale=0.4]{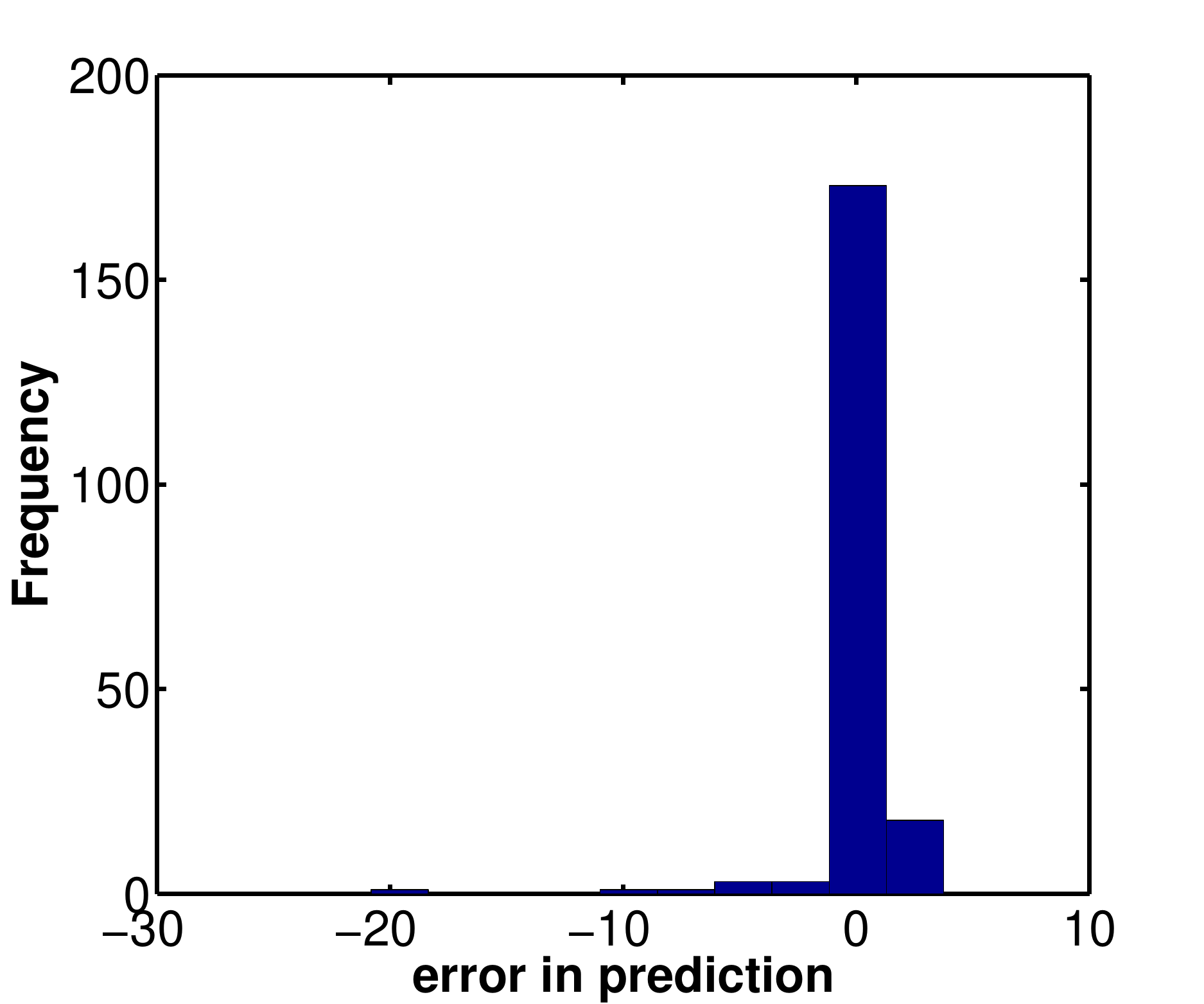}}
\caption{Histogram of errors in prediction using ANN.
\label{fig:ParamHist_NN}}
\end{figure}
%
%

\begin{figure}[h]
  \centering
  \subfigure[Prediction errors $\log(\varepsilon) - \widehat{\log(\varepsilon)}$ - 100 samples] {\includegraphics[scale=0.4]{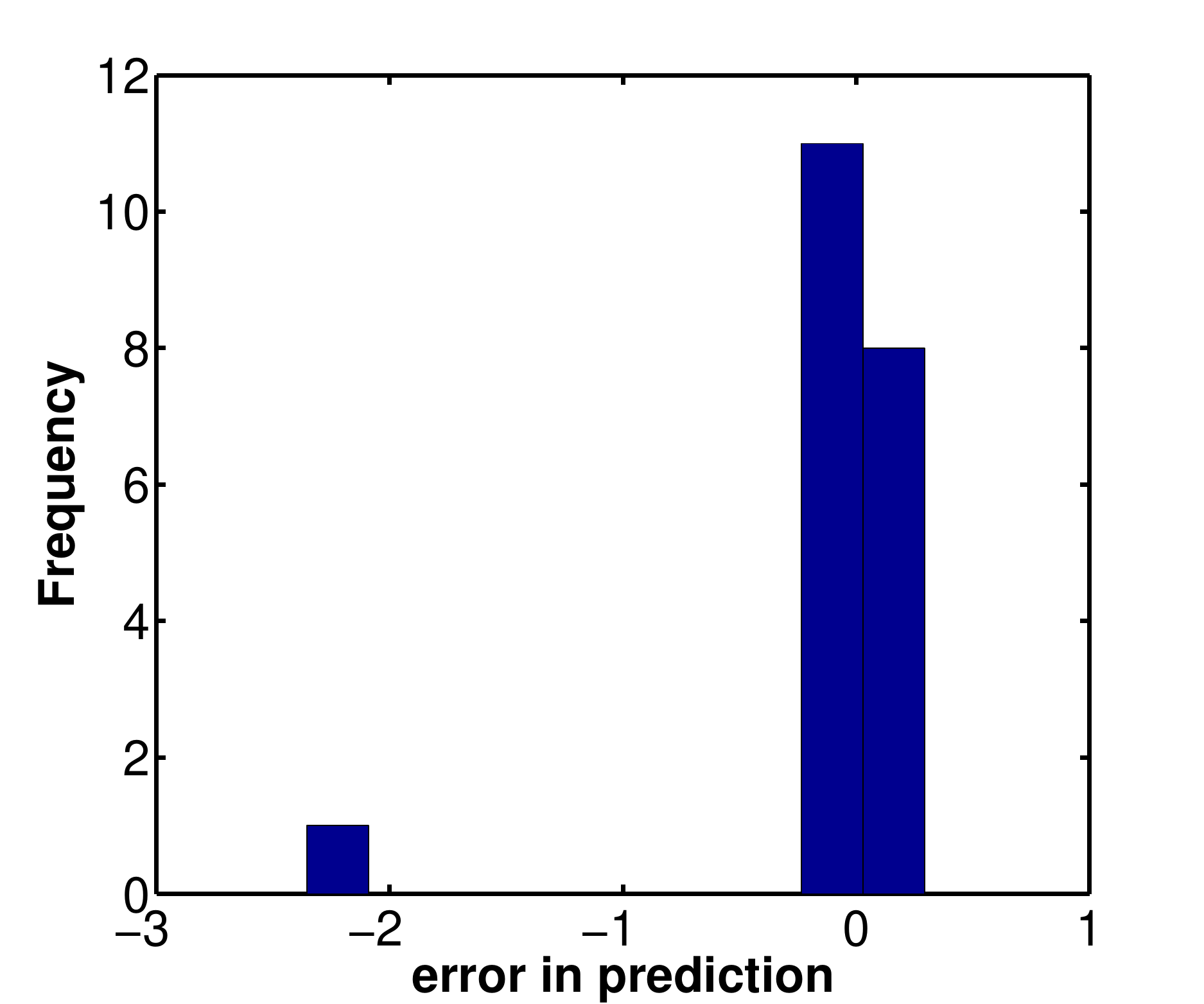}}
  \subfigure[Prediction errors $\varepsilon - \hat{\varepsilon}$ - 100 samples]{\includegraphics[scale=0.4]{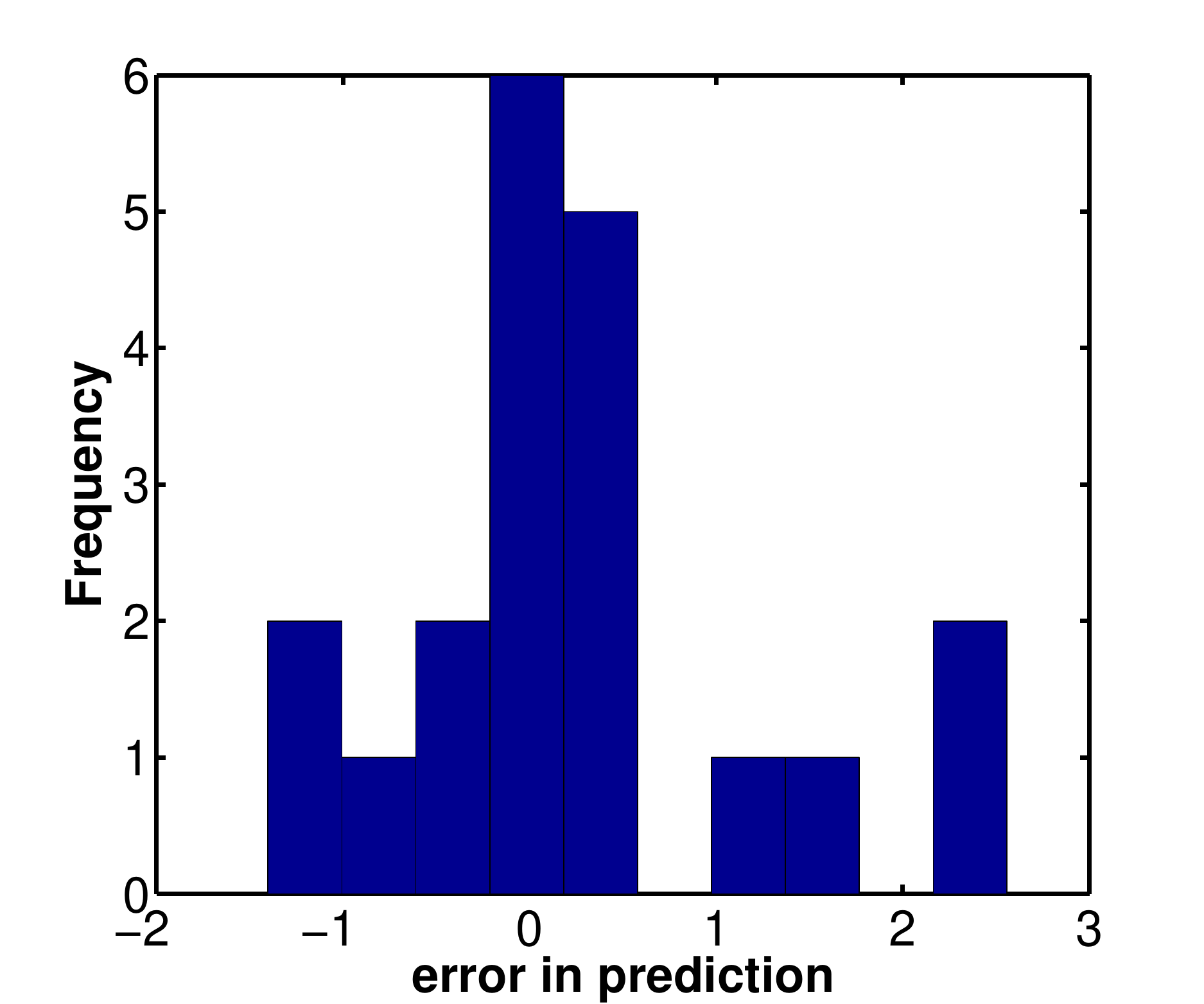}}
  \subfigure[Prediction errors $\log(\varepsilon) - \widehat{\log(\varepsilon)}$ - 1000 samples]{\includegraphics[scale=0.4]{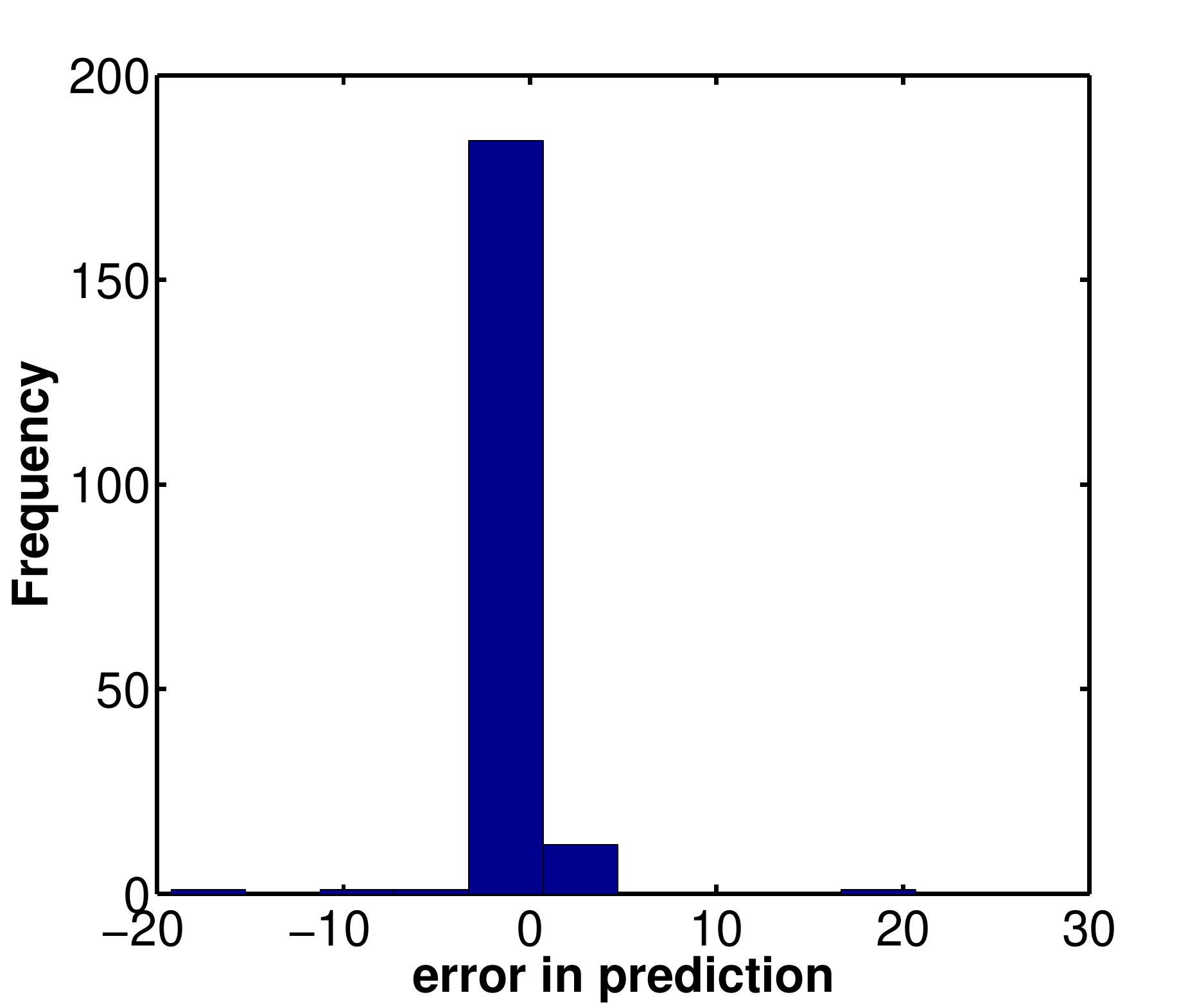}}
  \subfigure[Prediction errors $\varepsilon - \hat{\varepsilon}$ - 1000 samples]{\includegraphics[scale=0.4]{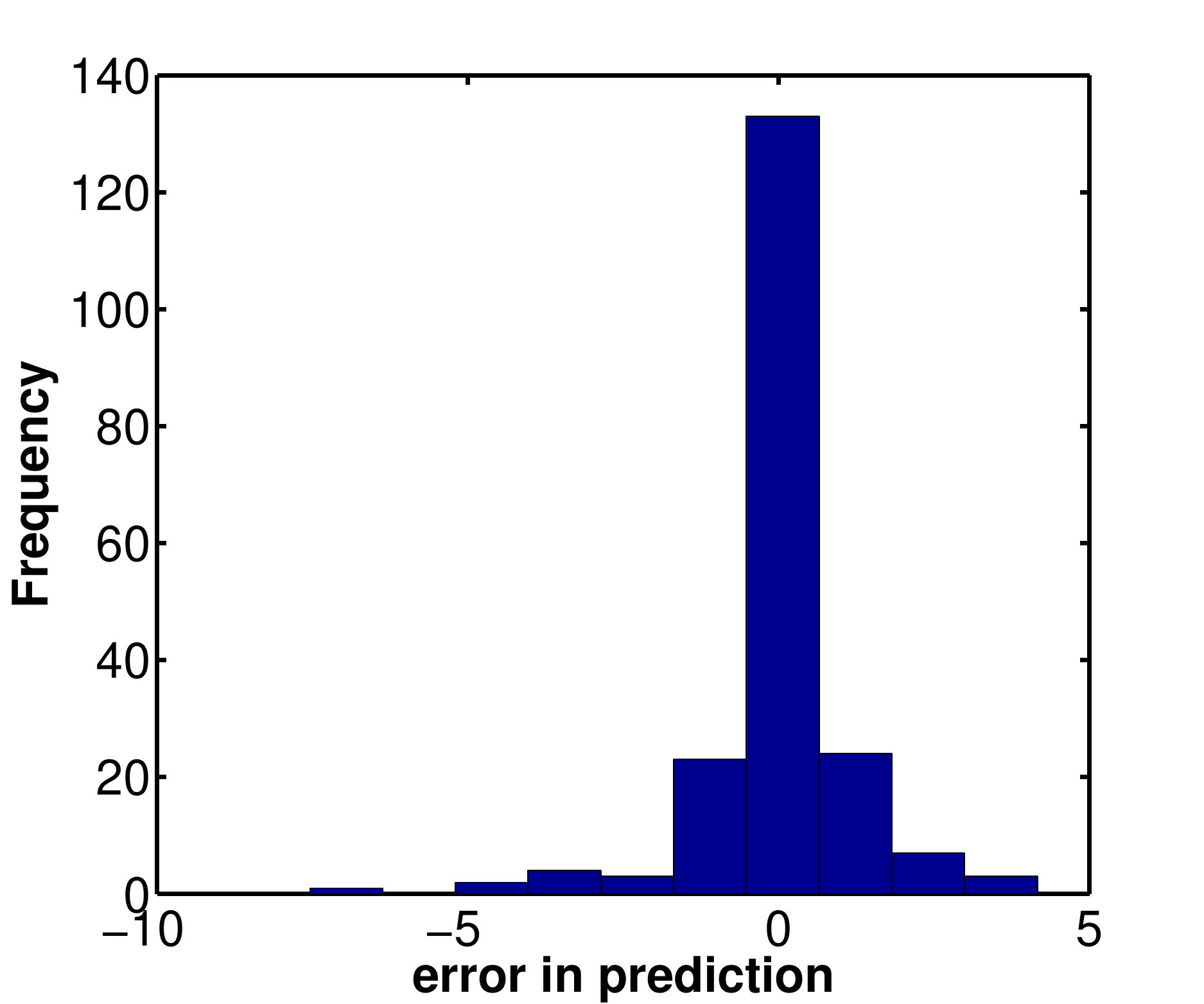}}
\caption{Histogram of errors in prediction using GP.
\label{fig:ParamHist_GP}}
\end{figure}

 Scaling the data and targeting $\log{\varepsilon}$ errors clearly improve the performance of our probabilistic models. Consequently for the rest of the manuscript we will only use model \eqref{eqn:prob_model_scale}. To asses the quality of the probabilistic models a five-fold cross-validation process is also used. The results computed using formula \eqref{eqn:err_fold_average} are shown in Table \ref{tab:experm2}.  Neural network outperforms the Gaussian process and estimates the errors more accurately. It also has less variance than the Gaussian process which indicates it has more stable predictions.

%
\begin{table}[H]
\begin{center}
    \begin{tabular}{ | l | l | l |}
    \hline
   & $\textnormal{E} $  & $ \textnormal{VAR} $
      \\ \hline
 ANN & $0.004004$ & $2.16 \times 10^{-6	}$
     \\ \hline
 GP & $0.092352$ & $ 1.32 \times 10^{-5} $
 \\ \hline
     \end{tabular}
\end{center}
 \caption{ANN and GP statistical results  over 5 fold cross validation.}
  \label{tab:experm2}
\end{table}

Figure \ref{fig:expm2_error_estimates} illustrates the average of errors in predictions over five different ANN and GP configurations. In each configuration, the machine is trained on random $80 \%$ split of data set and tested on the fixed selected test data shown in figure \ref{fig:expm2_error_estimates}. Getting the average of predictions on different trained models decreases the bias in predictions.

%
\begin{figure}[H]
	\begin{centering}
	\includegraphics[width=0.55\textwidth, height=0.45\textwidth]{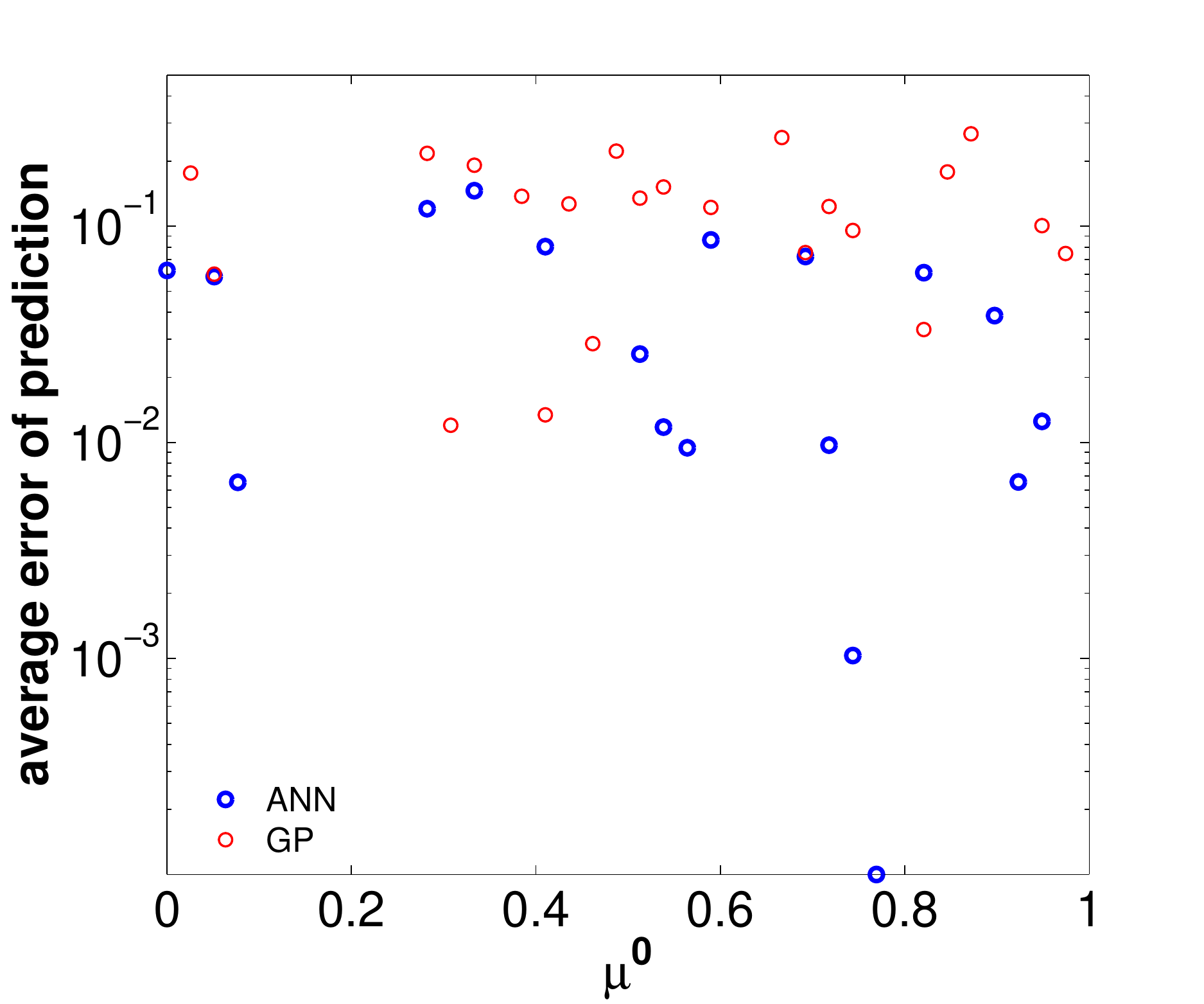}
    \caption{The average of errors in predictions using five different trained models}
	\label{fig:expm2_error_estimates}
	\end{centering}
\end{figure}
%


\subsubsection{Construction of a $\mu-$feasible interval} \label{sec:feasible_interval}
 We saw in the previous subsection that probabilistic models can accurately estimate the error $\varepsilon$ \eqref{eqn:param_rang_err} associated with reduced order models. Thus we can employ them to establish a range of viscosity parameters around $\mu$ such that the reduced order solutions depending on $U_{\mu}$ satisfy some desired accuracy level. More precisely, starting from parameter $\mu$, a fixed POD basis dimension and a tolerance error $\log(\bar{\varepsilon})$, we are searching for an interval $[d_l, d_r]$ such that the estimated prediction $\widehat{\log(\varepsilon)}$ of the true error $\log(\varepsilon)$ \eqref{eqn:param_rang_err} meets the requirement
\begin{equation}\label{eqn:inequality_constraint}
 \widehat{\log(\varepsilon)}<\log(\bar{\varepsilon}), \forall \mu^0 \in [d_l, d_r].
\end{equation}

Our proposed strategy makes use of a simply incremental approach by sampling the vicinity of $\mu$ to account for the estimated errors $\widehat{\log(\varepsilon)}$  forecasted by the probabilistic models defined before. A grid of new parameters $\mu^0$ is build around $\mu$ and the machines predict the errors outward of $\mu$. Once the machines outputs are larger than the prescribed error $\log(\bar{\varepsilon}),$ the previous $\mu^0$ satisfying the constrain \eqref{eqn:inequality_constraint} is set as $d_l$, for $\mu^0 < \mu$ or  $d_r$ for $\mu^0 > \mu$.

Figure \ref{fig:expm2_range} illustrates the range of parameters estimated by the neural network and Gaussian process against the true feasible interval and the results show good agreement. For this experiment we set ${\mu}=0.7$, dimension of POD=9 and error threshold $\bar{\varepsilon} = 10^{-2}$.  Values of $\mu^0 = \mu \pm 0.001\cdot i,$ $i=1,2,..$ are passed to the probabilistic models. The average range of parameters obtained over five different configurations with neural network is $[0.650, 0.780]$ while in the case of Gaussian process we obtained $[0.655,0.780]$. 
In each configuration, we train the model with $80 \%$ random split of the data set and test it over the fixed test set of figure \ref{fig:expm2_range}.
 For this design, the true range of parameters is $[0.650,0.785]$ underlying the predicting potential of machine learning models.

%
\begin{figure}[H]
	\begin{centering}
	\includegraphics[width=0.65\textwidth, height=0.45\textwidth]{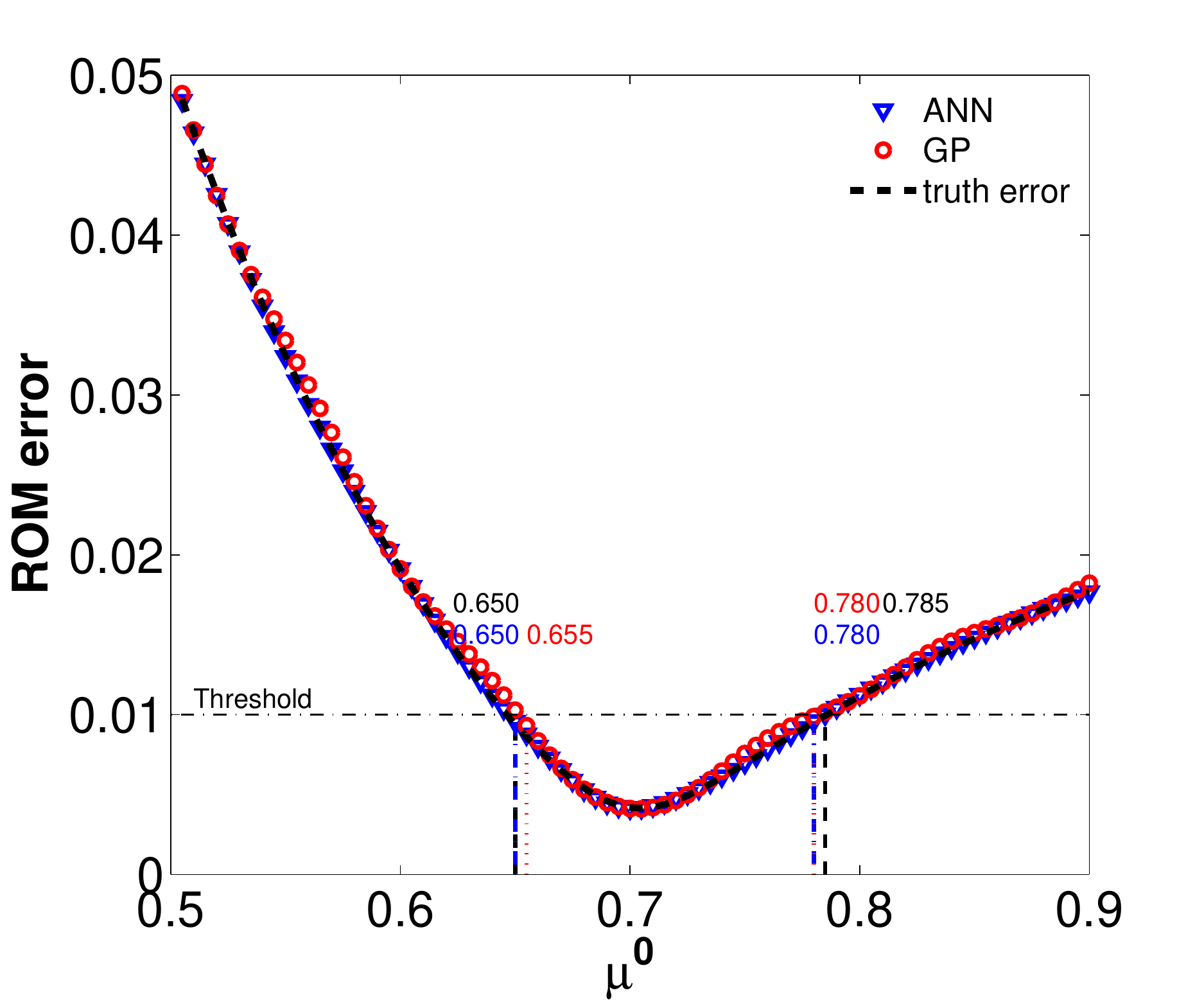}
   \caption{The average range of parameter $\mu^0$ obtained with ANN and GP for dimension of POD=9 and ${\mu}=0.7$. The desired accuracy is $\bar{\varepsilon} = 10^{-2}$. The numbers represent the left and the right edges of the predicted vs the true feasible intervals. }. \label{fig:expm2_range}	
	\end{centering}
\end{figure}
%

\subsubsection{The parametric map as a reunion of $\mu-$feasible intervals}

A reunion of different $\mu_k$-feasible intervals can be designed to cover a general entire 1D-parametric domain $[A,B]$. We refer to this reunion as a parametric map and once such construction is available will allow for reduced order simulations with a-priori error quantification for any value of viscosity parameter $\mu^0 \in [A,B]$.

A greedy strategy is described in Algorithm \ref{alg:map_generation} and its output is a collection of feasible intervals $\cup_{k=1}^n[d_l^k,d_r^k] \supset [A,B]$. Each interval $[d_l^k,d_r^k]$ is associated with some accuracy threshold $\bar{\varepsilon}_k$.  For small viscous parametric values we found out that designing $\mu_k-$feasible intervals associated with higher precision levels (i.e. very small thresholds $\bar{\varepsilon}_k$) is impossible since the dynamics of parametric 1D-Burgers model solutions changes dramatically with smaller viscosity parameters. In consequence we decided to let $\bar{\varepsilon}_k$ vary along the parametric domain to accommodate the solution physical behaviour. Thus a small threshold $\bar{\varepsilon}_0$ will be initially set and as we will advance charting the parameter interval $[A,B]$ from right to left, the threshold $\bar{\varepsilon}_k$ will be increased.

The algorithm starts by selecting the first centered parameter $\mu_0$ responsible for basis generation. It can be set to $\mu_0 = B$ but may take any value in the proximity of $B$,~$\mu_0\leq B$. This choice depends on the variability of parametric solutions in this domain region and by electing $\mu_0$ to differ from the right edge of the domain, the number $n$ of feasible intervals will be decreased.

The next step is to set the threshold $\bar{\varepsilon}_0$ along with the maximum permitted size of the initial feasible interval to be constructed. This is set to $2\cdot r_0$, thus $r_0$ can be referred as the interval radius. Along with the radius, the parameter $\Delta s$ will decide the maximum number of probabilistic model calls employed for the construction of the $\mu_0$-feasible interval. While the radius is allowed to vary during the algorithm iterations, $\Delta s$ is kept constant. Finally the dimension of POD basis has to be selected together with three parameters $\beta_1,~\beta_2$ and $\beta_3$ responsible for changes in the threshold, radius and selecting a new parameter location $\mu_k$ encountered during the procedure.

The instructions between lines $5$ and $20$ generate the $\mu_k$-feasible interval, for the case when the current centered parameter $\mu_k$ represents an interior point of $[d_l^k, d_r^k]$. For situation when $\mu_k = d_l^k$ or $\mu_k = d_l^k$, the threshold has to be increased (by setting $\bar{\varepsilon}_{k+1} = \beta_1\bar{\varepsilon}_k$ at line $22$), since the reduced order model solutions can not satisfy the desired precision according to the estimated probabilistic errors. At this stage the radius is decreased too since we reached a parameter region where model dynamics changes rapidly and a larger feasible interval is not possible. Once the new centered parameter $\mu_{k+1}$ is proposed the algorithm checks if the following constrain is satisfied


\begin{equation}\label{eq::constrain2}
 [d_l^{k+1},d_r^{k+1}] \bigcap \bigg( \bigcup_{i=1}^{k} [d_l^{i},d_r^{i}] \bigg) \neq \emptyset.
\end{equation}

This is achieved by checking the estimated reduced order model solution error at $d_l^k$ using the basis defined by the high-fidelity trajectory at $\mu_{k+1}$ (see instruction $25$). If the predicted error is smaller than the current threshold, assuming a monotonically increasing error with larger distances $d(\mu^0,\mu_{k+1})$, the reduced order model solutions should satisfy the accuracy threshold for all $\mu^0 \in [\mu_{k+1},d_l^k].$ In consequence the equation \eqref{eq::constrain2} will be satisfied for the current $\mu_{k+1}$, if we set $r_{k+1}=\mu_{k+1}-d_l^k$ (see instruction $28$). In the case the error estimate is larger than the present threshold, the centered parameter $\mu_{k+1}$ is updated to the middle point between old $\mu_{k+1}$ and $d_l^k$. For the situation where the monotonic property of the error does not hold in practice, a simply safety net is used at instruction $12$. The entire algorithm stops when $\mu_{k} \leq A.$

For our experiments we set $A=0.01$, $B=1$, $\bar{\varepsilon}_0 = 1.e-2,~\Delta s = 5.e-3,~r_0=0.5,~dim = 9,~\beta_1 = 1.2,~\beta_2 = 0.9$ and $\beta_3=1.4$.  We initiate the algorithm by setting $\mu_0=0.87$, and the first feasible interval $[ 0.7700,1.0500]$ is obtained. Next the algorithm selects $\mu_1=0.73$ with the associated range of $[ 0.6700,0.8250]$ using the same initial threshold level. As we cover the parametric domain from right to left, i.e. selecting smaller and smaller parameters $\mu_k$, the algorithm enlarges the current threshold $\bar{\varepsilon}_k$, otherwise the reduced order models would not satisfy the initial precision. We continue this process until we get the threshold $6.25$ with $\mu_{32}=0.021$ and the corresponding feasible interval $[0.00940,0.039]$. The obtained parametric map is depicted in Figure \ref{fig:expm2_numMus} where the associated threshold varies with the parameter change.

\begin{algorithm}
 \begin{algorithmic}[1]
\State Select $\mu_0$ as the right edge of the parameter interval, i.e. $\mu_0 = B$.
\State Set error threshold $\hat{\varepsilon}_0$, step size $\Delta s$ for selection of new parameter locations $\mu^0$, the maximum search radius $r_0$, dimension of POD basis $dim$ and $\beta_1,~\beta_2$ and $\beta_3$.
\State Set $k=0$.
\State DO
 \State FOR i=1 to $int(\frac{r_k}{\Delta s})$
  \State Set $\mu_+^o = \mu_k + i \Delta s$
  \State IF $\phi(\mu_+^o,\mu_k,dim) > \log(\bar{\varepsilon}_k)$  THEN
  \State Set $d_r^k = \mu_k + (i-1) \Delta s$. EXIT.
 \State END IF
 \State END FOR
 \State IF $k>0$ THEN
    \State IF $d_r^k<d_l^{k-1}$ THEN $\mu_k = \frac{\mu_k+d_l^{k-1}}{2}$. GOTO $5$.
    \State END IF
  \State END IF
\State FOR $j=1$ to $int(\frac{r_k}{\Delta s})$
  \State Set $\mu_-^o = \mu_k - j \Delta s$
  \State IF $\phi(\mu_-^o,\mu_k,dim) > \log(\bar{\varepsilon}_k)$  THEN
  \State Set $d_l^k = \mu_k - (j-1) \Delta s$. EXIT.
  \State END IF
   \State END FOR
\State IF (i=1).OR.(j=1) THEN
\State Set $\bar{\varepsilon}_{k} = \beta_1 \cdot \bar{\varepsilon}_k$; $r_{k} = \beta_2 \cdot r_k$;  GOTO $5$.
\State ELSE $\mu_{k+1} = \mu_k - \beta_3 (j-1) \Delta s$; $\bar{\varepsilon}_{k+1} = \bar{\varepsilon}_k$.
\State END IF
\State WHILE $\phi(d_l^k,\mu_{k+1},dim) > \log(\bar{\varepsilon}_{k+1})$ DO
\State $\mu_{k+1} = \frac{\mu_{k+1} + d_l^k}{2}$.
\State END WHILE
\State Set $r_{k+1}=\mu_{k+1}-d_l^k$.
\State $k=k+1$.
\State WHILE $\mu_{k} \geq A$ THEN STOP.

\end{algorithmic}
 \caption{Generation of parametric map for reduced order models usage}
 \label{alg:map_generation}
\end{algorithm}

%
\begin{figure}[H]
	\begin{centering}
	\includegraphics[width=0.65\textwidth, height=0.45\textwidth]{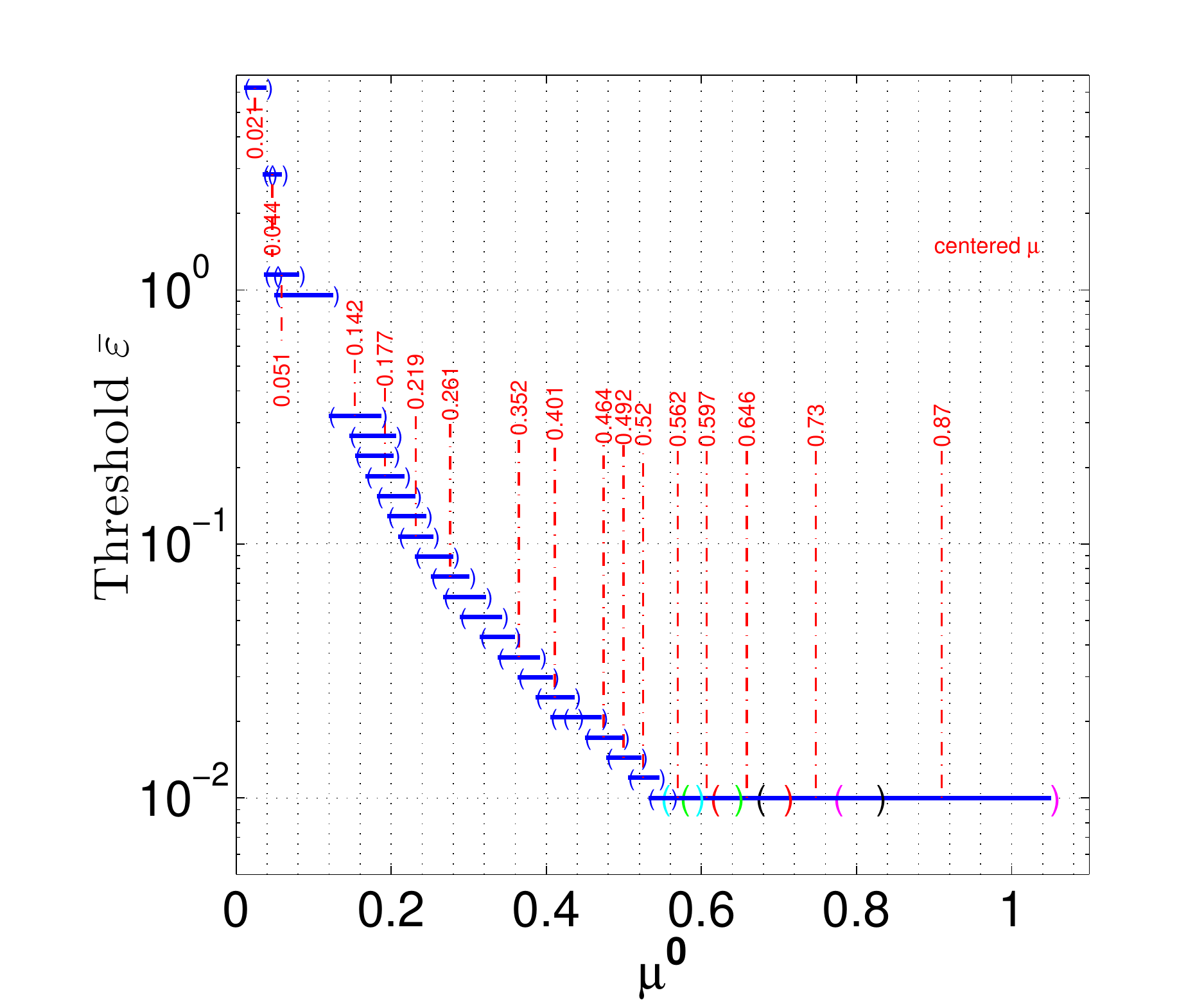}
    \caption{The diffusion parametric map defining the local feasible intervals and their corresponding errors. Associated with one feasible interval there is a centered parameter $\mu$ high-fidelity trajectory that guides the construction of a reduced basis and operators such that the subsequent reduced order model solutions along this interval are accurate within the threshold depicted by the Y-axis. }
	\label{fig:expm2_numMus}
	\end{centering}
\end{figure}

%


\subsection{ Select the best already existing ROMs for a new parameter value $\mu^0$}
\label{subsec:select_best_basis}

An important and practical concern associated with the reduced order models is their lack of robustness with respect to parameter change. Here, we propose to solve another practical problem, i.e. giving a collection of reduced bases computed at various locations in the parameter space, find the one that proposes the most accurate reduced order solution for a new viscosity parameter $\mu^0$. We will rely on similar probabilistic models
built in subsection \ref{sect:err_estimate}. The input features for the GP and ANN are
the new parameter $\mu^0$, a parameter $\mu$ whose corresponding trajectory is used as snapshots for generating basis $U_{\mu}$ and the dimension of the reduced manifold. The target random variable $\hat{y} = \widehat{\log{\varepsilon}}$ is the estimated log of error of the reduced order model solution at $\mu^0$ using the basis $U_{\mu}$.
For our experiments, approximately $12,000$ samples were generated and include different values of POD basis dimensions from $4$ to $15$, new viscosity parameter $\mu^0 \in \left[0.1,1 \right]$ and parameters $\mu_k \in \left[10^{-2},1 \right] $  equally distributed with interval $0.1$ and $0.01$, respectively and the corresponding log of the Frobenius norm of true ROM errors $\log(\varepsilon)$ \eqref{eqn:param_rang_err}.

The data set constructed for this problem is similar to the one employed for designing the errors models. For that problem we fixed the parameter $\mu$ associated with the high-fidelity trajectory used for basis generation and computed the ROM errors corresponding to different $\mu^0$. Here we fixed $\mu^0$ and then computed the ROM errors using different bases $U_\mu$.

The data set is randomly partitioned into $5$ equal size sub-samples and a $5-$ fold cross-validation process is used to asses the quality of the probabilistic models. A neural network with 6 hidden layers and hyperbolic tangent sigmoid activation function in each layer is used while for the Gaussian process we have employed the squared-exponential-covariance kernel \eqref{eq_cov}. Table \ref{tab:experm3} shows numerical results obtained with Neural networks and Gaussian process, over five folds using \eqref{eqn:err_fold_average}. The Gaussian process outperforms the neural network and has less variance which indicates the GP is more accurate and stable for this specific problem than neural network.
\begin{table}[H]
\begin{center}
    \begin{tabular}{ | l | l | l |}
    \hline
 & $\textnormal{E} $  & $ \textnormal{VAR} $
      \\ \hline
 ANN & $0.002839 $ & $1.5496 \times 10^{-5}$
     \\ \hline
 GP & $0.001135 $ &  $ 2.5922 \times 10^{-8} $
     \\ \hline
     \end{tabular}
\end{center}
 \caption{Statistical results of error in predictions in ANN and GP over 5 fold cross validation}
  \label{tab:experm3}
\end{table}

The results in Figure \ref{fig:combin_bases} illustrate the errors in prediction of the reduced order models errors for two viscosity parameters $\mu^0=0.35$ and $0.65$ and various bases represented along the $y$ axis. The mean of the estimates in the case of the ANN are closer than the true errors in comparison with the output of the Gaussian Process for this particular example.


 Moreover we can notice that the estimation curves are crossing initially close to $\mu=0.45$. It suggests that one can choose the high-fidelity trajectory $\mu=0.45$ to construct a reduced order basis such that to obtain similar accuracy levels for both reduced order solutions computed at new viscosity parameters $\mu^0 = 0.35$ and $0.65$. This reveals the non-monotonic property of the reduced order model error with respect to the distance $d(\mu^0,\mu)$.

%
\begin{figure}[h]
  \centering
  \subfigure[Neural Network] {\includegraphics[scale=0.4]{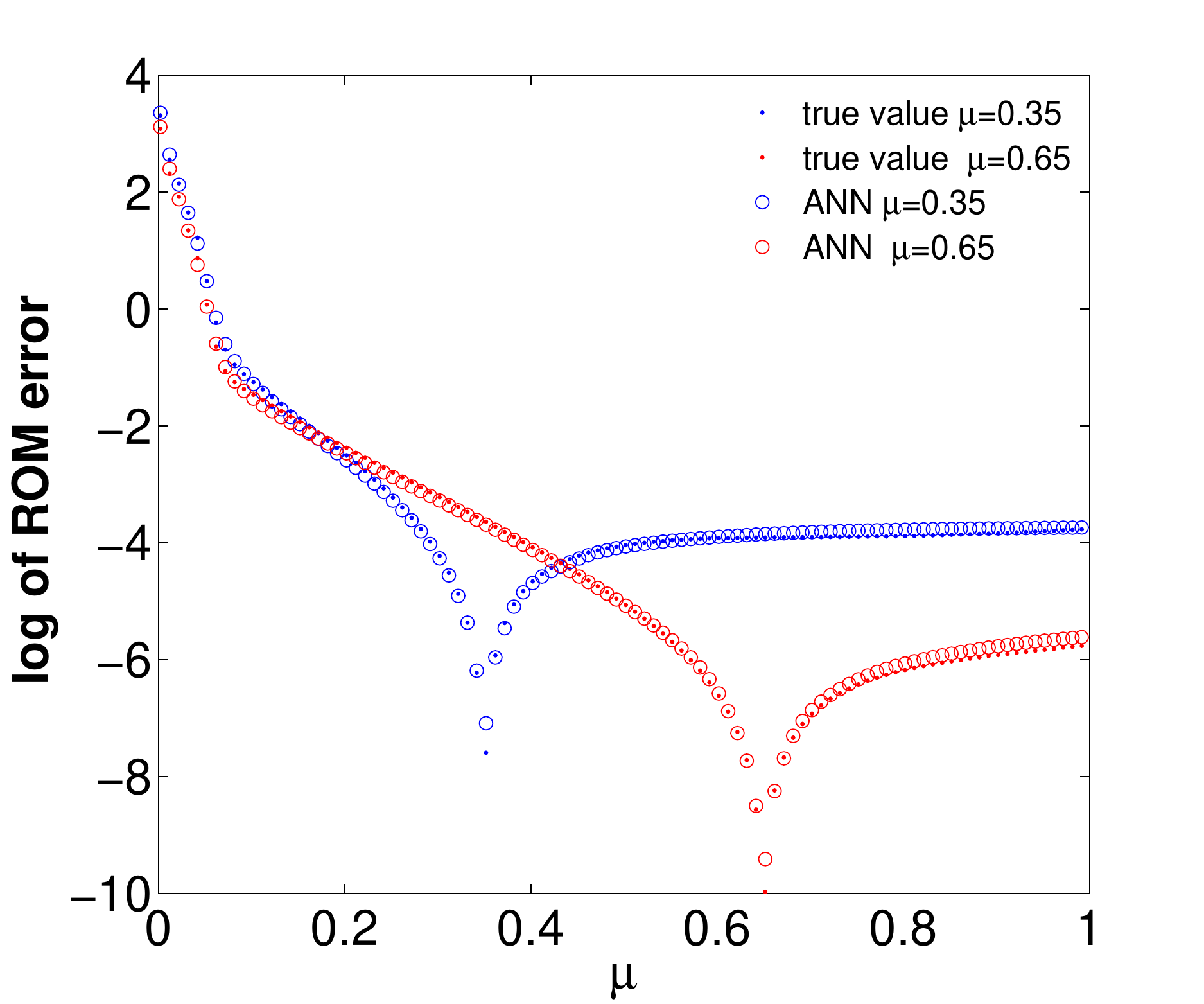}}
  \subfigure[Gaussian Process]{\includegraphics[scale=0.4]{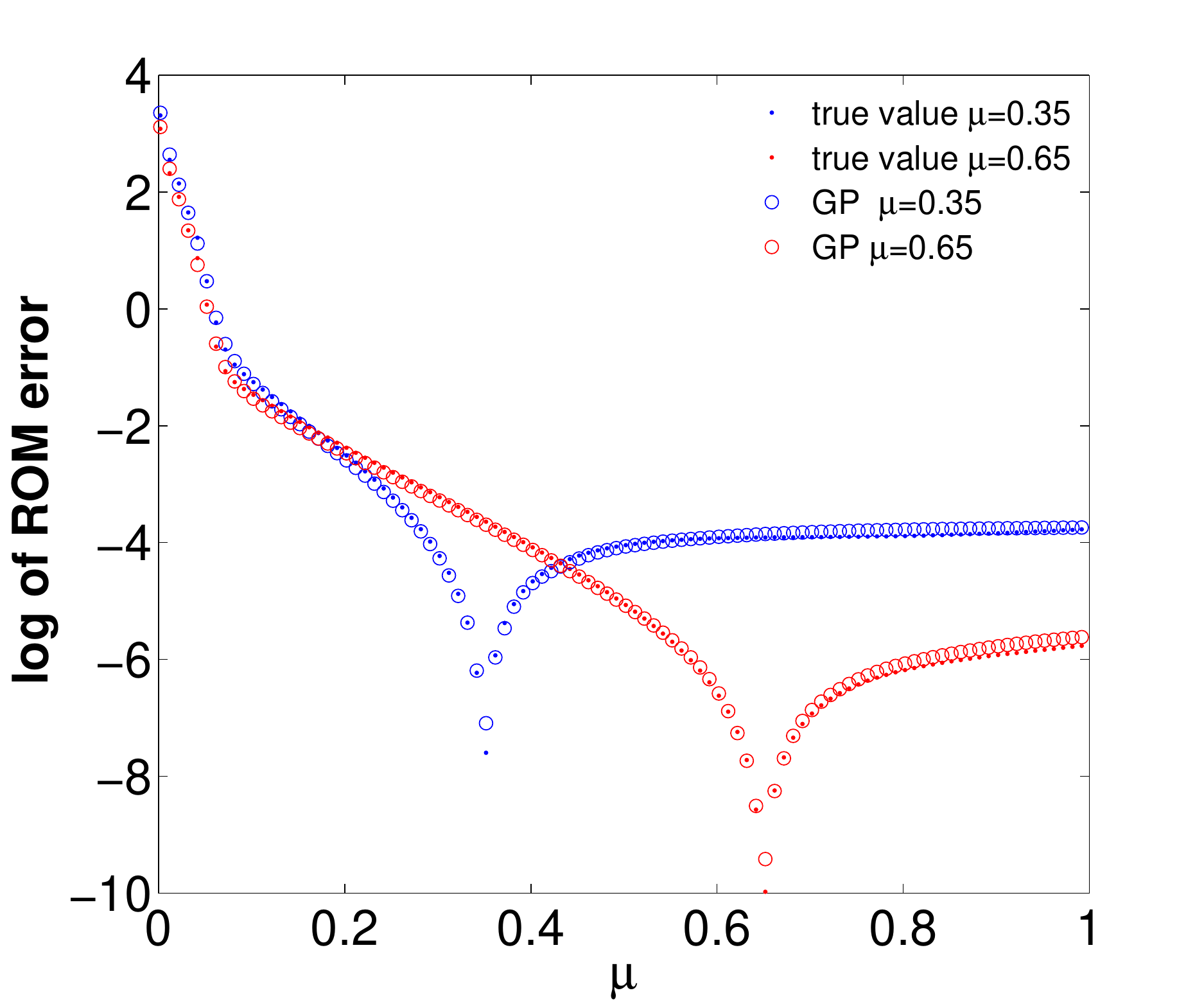}}
\caption {ANN and GP prediction errors ($\log(\varepsilon)-\widehat{\log(\varepsilon)})$ of the ROM errors for dimension of POD basis=12, $\mu^0=0.65$ and $\mu^0=0.35$ and different values of $\mu$.
\label{fig:combin_bases}}
\end{figure}
%
\subsection{Combining Available Information for Accurate ROMs at New Parametric Configurations}
Experiments in subsection \ref{subsec:select_best_basis} revealed the potential of probabilistic models to select a hierarchy of reduced manifolds that produces higher accurate solutions. Figure \ref{fig:combin_bases} depicts the accuracy of reduced order models for $\mu^0=0.35$ and $0.65$ using POD basis computed at various locations in the parameter interval. Assuming only $10$ existing POD subspaces constructed for parameters equally distributed along the interval $[0.1,1]$, $\mu = 0.1,~0.2,~0.3,..$, figure \ref{fig:combin_bases} shows that the most accurate reduced solutions for $\mu^0 = 0.35$ are produced using the bases computed at $\mu_1 = 0.3$ and $\mu_2 = 0.4$. Consequently, the numerical experiments described here focus on the construction of a POD basis for $\mu^0 = 0.35$ by combining the data available for $\mu_1=0.3$ and $\mu_2=0.4$.

The performances of the discussed methods (bases concatenation, Lagrange interpolation of bases in the matrix space and in the tangent space of the Grassmann manifold, Lagrange interpolation of high-fidelity solutions) are shown in the case of three main experiments: variation in the final time $t_f$, in the non-linear advection coefficient $\nu$ and POD basis size. The first two experiments scale the time and space and modify the linear and nonlinear characteristics of the model. For example, in the case of a tiny small final time and advection coefficient, the diffusion linear part represents the main dynamical engine of the model thus it behaves linearly. The results are compared against reduced order models constructed using $U_{\mu_1}$ and $U_{\mu_2}$, respectively.

The experiments make use of a space mesh of $201$ points while $301$ time steps are used. Figure \ref{Fig::Tiny_final_time} illustrates the Frobenius norm error between the high fidelity and reduced order model solutions for the final time $t_f = 0.01$. Panel (a) presents the accuracy results as a function of the advection coefficient $\nu$. Interpolating the high-fidelity solutions leads to the most accurate reduced order model. For large advection coefficients all of the methods suffer accuracy losses. Among the potential explanations we include the constant size of the POD basis and its linear dependence on the viscosity parameter assumed by all of the methods in various forms. Keeping the POD basis size constant represents a source of errors as seen in Figure \ref{fig:parameter_contour} where the viscosity parameter is varied.
\begin{figure}[h]
  \centering
  \subfigure[The nonlinearity model variations  ] {\includegraphics[scale=0.4]{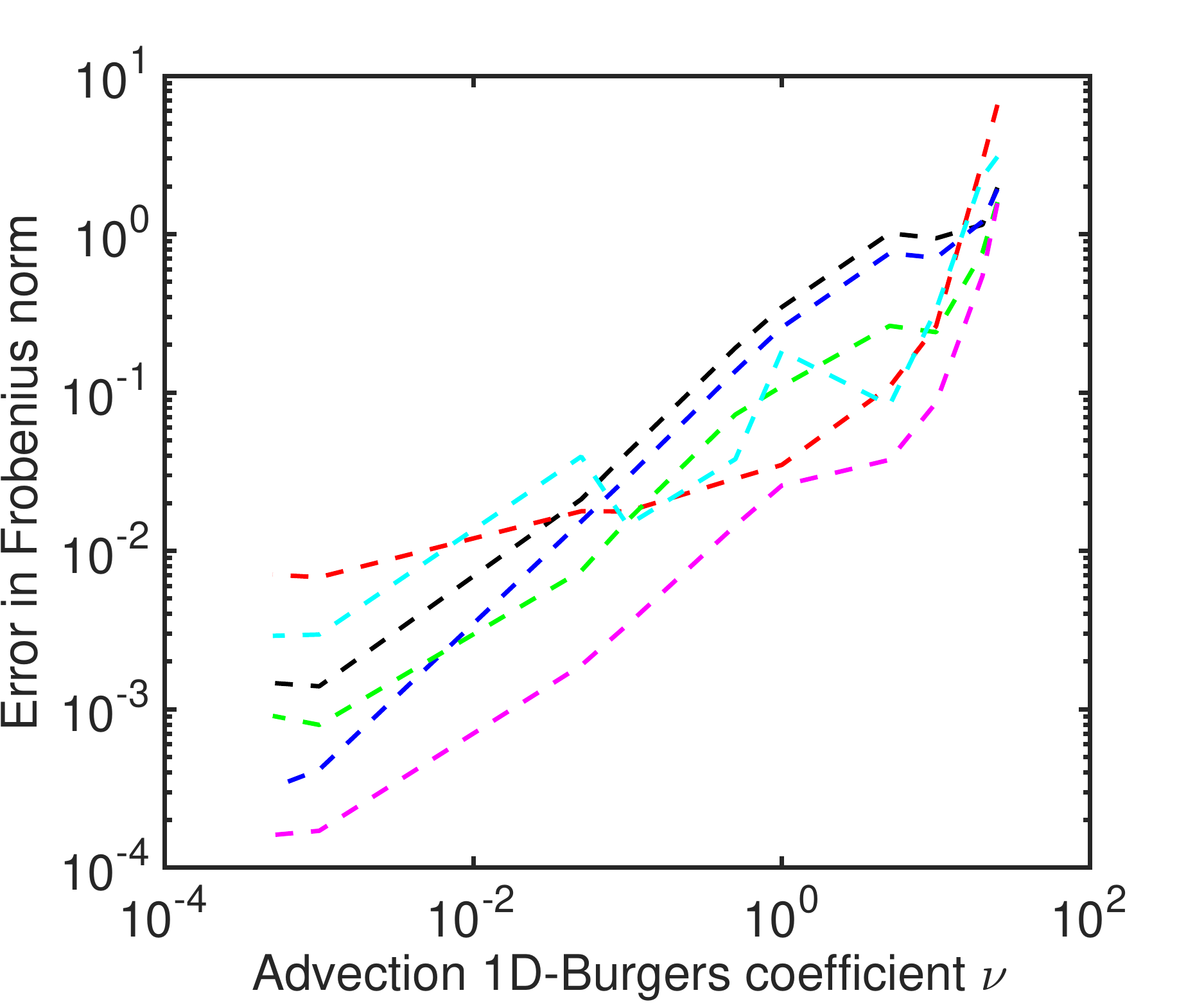}}
  \subfigure[Dimension of POD basis variation]{\includegraphics[scale=0.4]{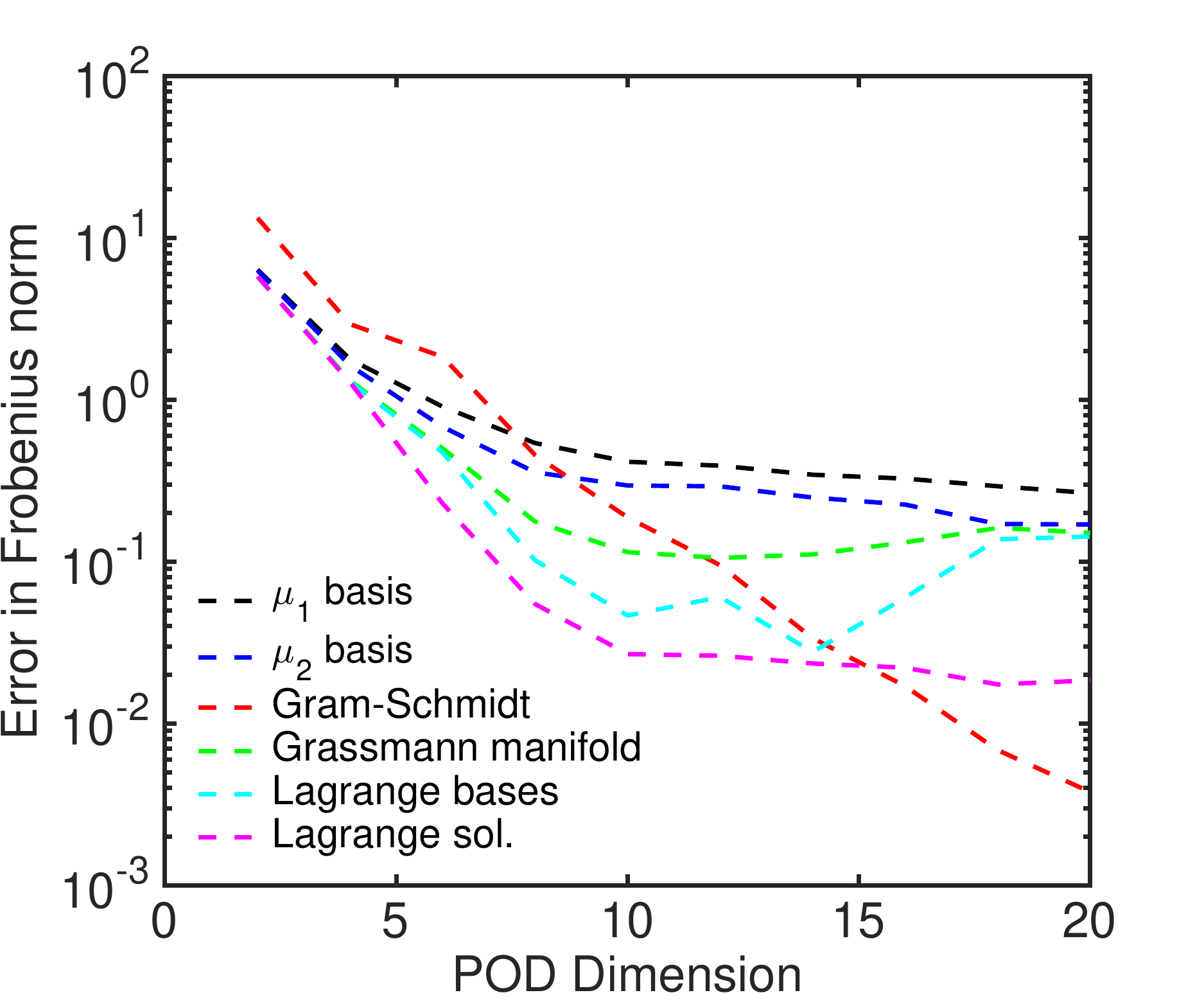}}
\caption{Strategies comparison for generation of accurate ROMs for a new viscosity parameter $\mu^0 = 0.35$ and $t_f = 0.01$.  }
\label{Fig::Tiny_final_time}
\end{figure}

By interpolating the geometry characteristics of the reduced subspaces via subspace angle interpolation or Grassmann manifold approach (only one is shown in the figures since for interpolating two bases the methods are the same) we expect to obtain more accurate reduced order models. While Lagrangian interpolation of the bases is performed in both matrix space and tangent space of the Grassmann manifold (shown in cyan and green in Figure \ref{Fig::Tiny_final_time},  the later approach performs better in this scenario confirming our expectations. The concatenation of bases using Gram-Schmidt was successful only for larger advection coefficients (red curve in Figure \ref{Fig::Tiny_final_time})(a) for a POD size set to $14$.

Increasing the dimension of the basis enhances the so called Gram-Schmidt reduced order model solution accuracy for $\nu=1$ (see Figure \ref{Fig::Tiny_final_time}(b)). For this case Lagrange interpolation in the matrix space shows better performances in comparison with the output of the Grassmann manifold approach.

Next we increase the nonlinearity characteristics of the model by setting the final time to $t_f = 1$ and Figure \ref{Fig::Larger_final_time} illustrates the Frobenius norm errors as a function of the advection coefficient $\nu$ and POD dimension size. The errors produced by the reduced order model derived via Grassmann manifold method are similar with the ones obtained by the surrogate model relying on a POD basis computed via the Lagrange interpolation of the high-fidelity model solutions.

 The Lagrange interpolation of bases in the matrix space is not successful as seen in both panels of figure \ref{Fig::Larger_final_time}. Increasing the POD size to $20$, the Gram-Schmidt approach enhance the accuracy of the solution (see Figure \ref{Fig::Larger_final_time}(b)).
\begin{figure}[h]
  \centering
  \subfigure[The nonlinearity model variations ] {\includegraphics[scale=0.4]{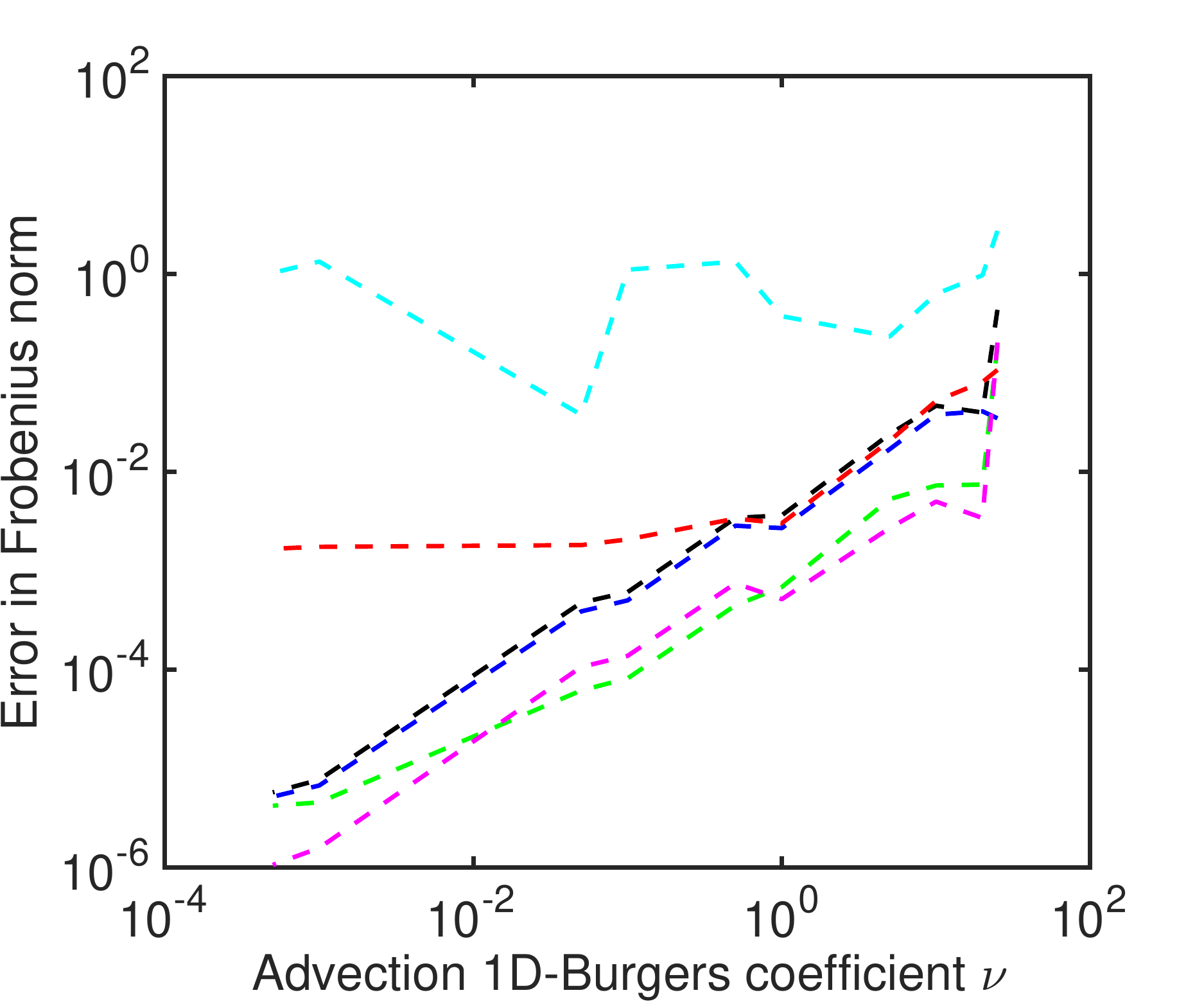}}
  \subfigure[Dimension of POD basis variation]{\includegraphics[scale=0.4]{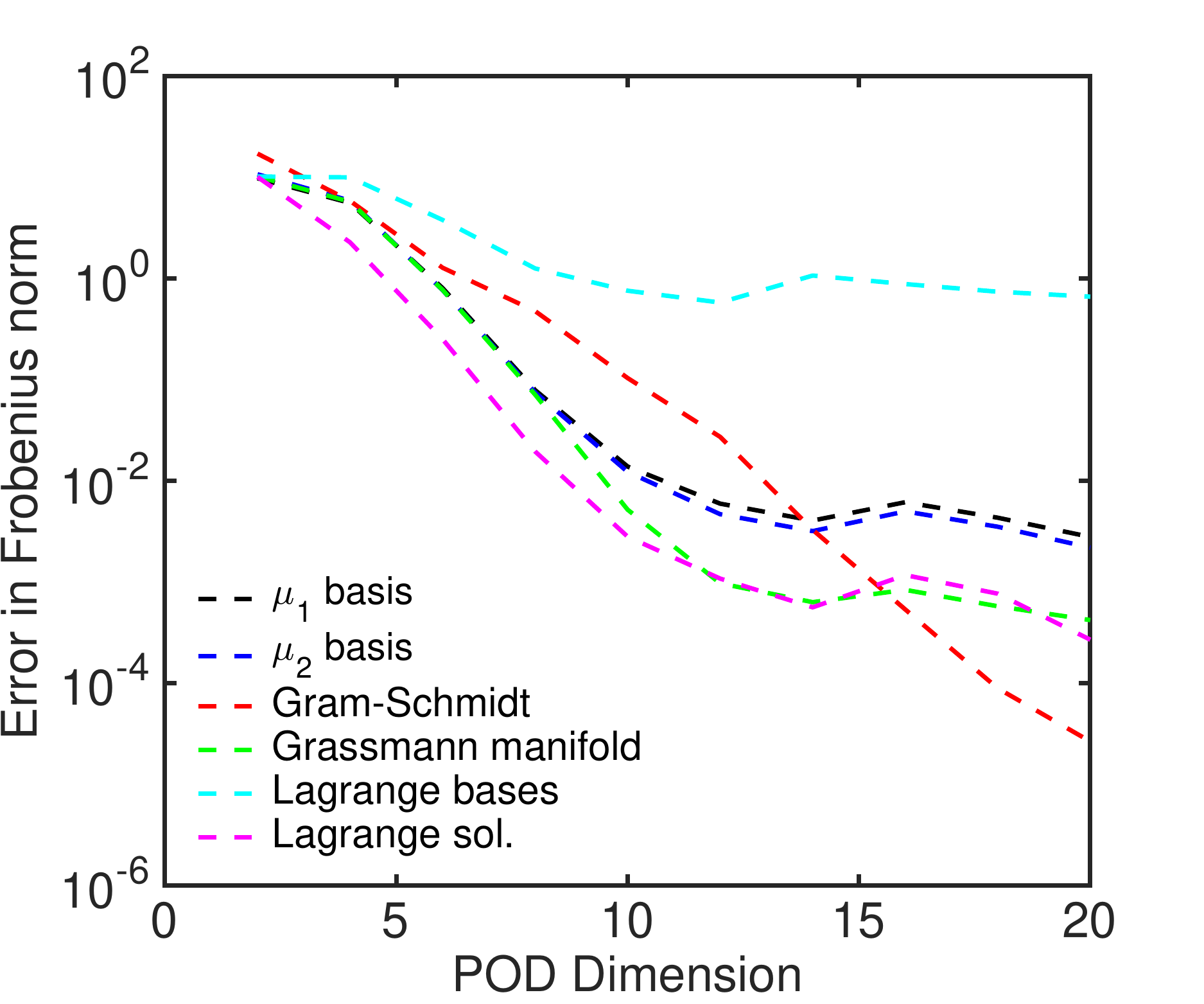}}
\caption{Strategies comparison for generation of accurate ROMs for a new viscosity parameter $\mu^0 = 0.35$ and $t_f = 1$.
\label{Fig::Larger_final_time}}
\end{figure}
%

\subsection{Optimal size of reduced order model}
\label{sect:optimal_base}
 Here we propose two alternative approaches to select the reduced basis size that accounts for specified accuracy levels in the reduced order model solutions. These techniques employ construction of probabilistic models $ \phi: {\bf z} \rightarrow \hat{y} $ via neural network and Gaussian process as stated in section \ref{sect:prob_fram}. The input features ${\bf z}$ for this problem consist of the viscosity parameter $\mu \in [0.01,1]$ and the log of the Frobenius norm of the error between the high-fidelity and reduced order models $log(\varepsilon)$ \eqref{eqn:param_rang_err}. The searched output $\hat{y}$ is the dimension of the reduced manifold $d$.
For the training phase, the data set is generated using several runs of the 1D-Burgers model with various viscosity parameters $\mu$, different basis sizes $d$ and the log of the Frobenius norms of the discrepancies between the full and the projected reduced
order solutions $log(\varepsilon)$. The machines will learn the sizes of reduced order basis $d$ associated with the parameter $\mu$ and the corresponding $log(\varepsilon)$. Later it will be able to estimate the proper size of reduced basis by providing it the specific viscosity parameter $\mu$ and the desired error $\log(\varepsilon)$. The computational cost is low once the probabilistic models are constructed. The output indicates the dimension of the reduced manifold for which the ROM solution satisfies the corresponding error threshold. Thus we do not need to compute the entire spectrum of the snapshots matrix in advance which for large spatial discretization meshes translates into important computational costs reduction.

For our experiments, approximately $9000$ samples were generated for different values of the viscosity parameter $\mu$ equally distributed within the interval $[10^{-2},1]$, various reduced basis dimensions from $4$ to $15$ and the corresponding $\log(\varepsilon)$. Figure \ref{fig:basis_contour_log} illustrates the contours of the log of reduced order model error, over the viscosity parameter domain and various POD sizes.

A neural network with $5$ hidden layers and hyperbolic tangent sigmoid activation function in each layer is used while for the Gaussian process we have used the squared-exponential-covariance kernel \eqref{eq_cov}. Table \ref{tab:Opt_log} show the average and variance of error in GP and ANN estimations using different sample sizes. The ANN outperforms the GP and as the number of data points grows, the accuracy increases and the variance decreases. The results are obtained using a conventional validation with $80\% $ of the sample size dedicated for training data and the other $20\% $ for the test data. The employed formula is described in equation \eqref{eqn:err_fold}.

\begin{figure}[h]
  \centering
  {\includegraphics[width=0.65\textwidth, height=0.55\textwidth]{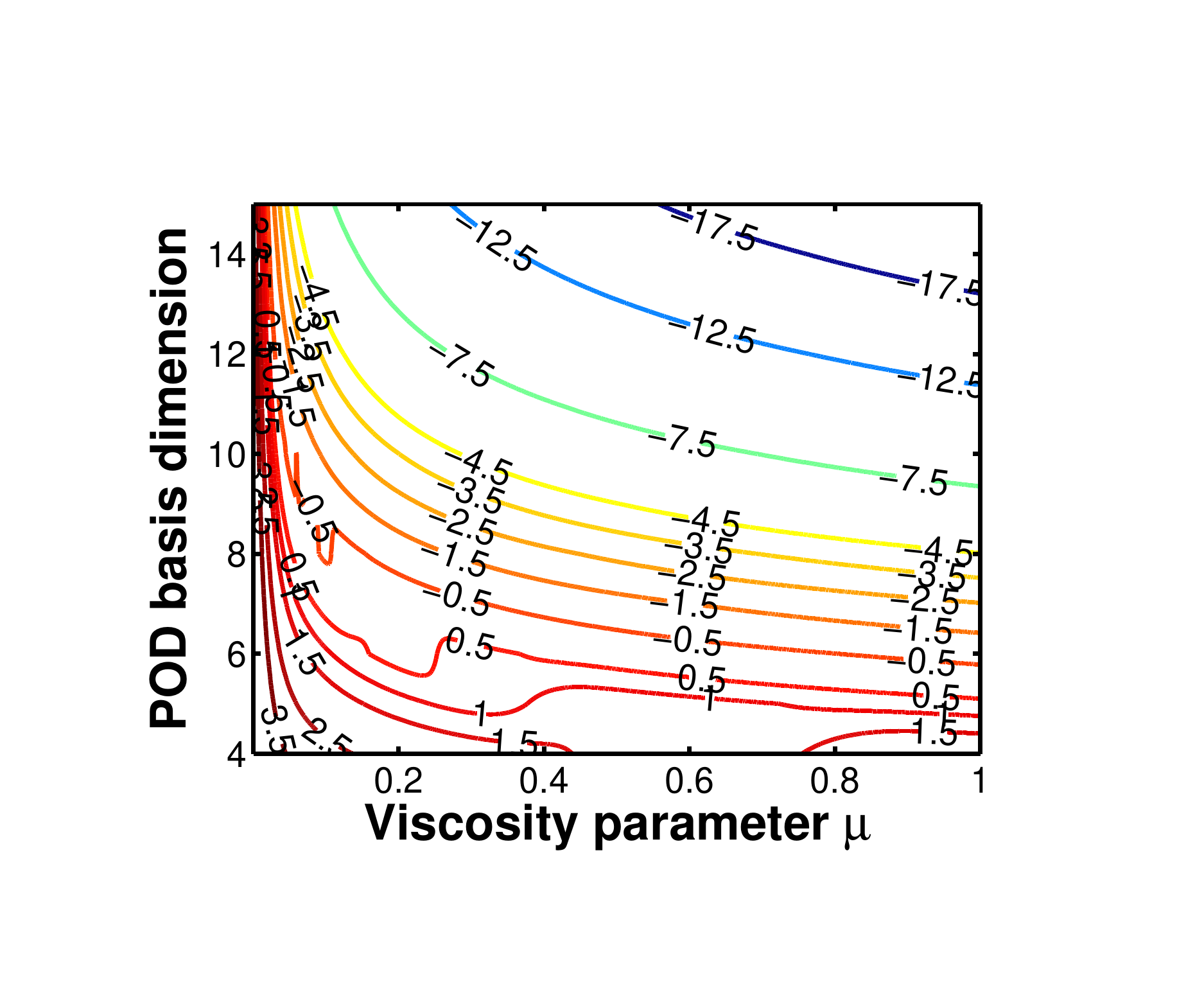}
\caption{Isocontours of the reduced model errors for different POD basis dimensions and viscosity parameters $\mu$.}
\label{fig:basis_contour_log} }
\end{figure}
%

%
\begin{table}[H]
\begin{center}
    \begin{tabular}{ | l | l | l |  l | l |}
    \hline
     & \multicolumn{2}{|c|}{GP} & \multicolumn{2}{|c|}{ANN} \\
 \hline
sample size &  $\textnormal{E}_{\rm fold}$   &  $\textnormal{VAR}_{\rm fold}$    & $\textnormal{E}_{\rm fold}$   &  $\textnormal{VAR}_{\rm fold}$
     \\ \hline
 100 & $ 0.2801 $ & $0.0901$ & $ 0.1580$ & $ 0.02204 $
     \\ \hline
 1000 & $0.1489$ & $ 0.0408 $ & $ 0.0121 $ & $ 0.0015 $
 \\ \hline
 3000 & $0.1013 $ & $ 0.0194 $ & $ 0.0273 $ & $ 0.0009 $
 \\ \hline
 5000 & $ 0.0884 $ & $ 0.0174 $ & $ 0.0080 $ & $ 0.0002 $
 \\ \hline
     \end{tabular}
\end{center}
 \caption{ Average and variance of errors in prediction of optimal reduced basis size using ANN and GP probabilistic models for different sample sizes}
  \label{tab:Opt_log}
\end{table}

Figures \ref{fig:hist_NNDim} and \ref{fig:hist_GPDim} shows the corresponding errors in estimation on $100$ and $1000$ training samples for both ANN and GP model.  These histograms as stated before, can assess the validity of GP assumptions. The data set distribution shape is closer to a Gaussian profile than in the case of the data set distribution discussed in section \ref{sect:err_estimate} used for generation of reduced order model error probabilistic models.

\begin{figure}[h]
  \centering
  \subfigure[$100$ samples] {\includegraphics[scale=0.4]{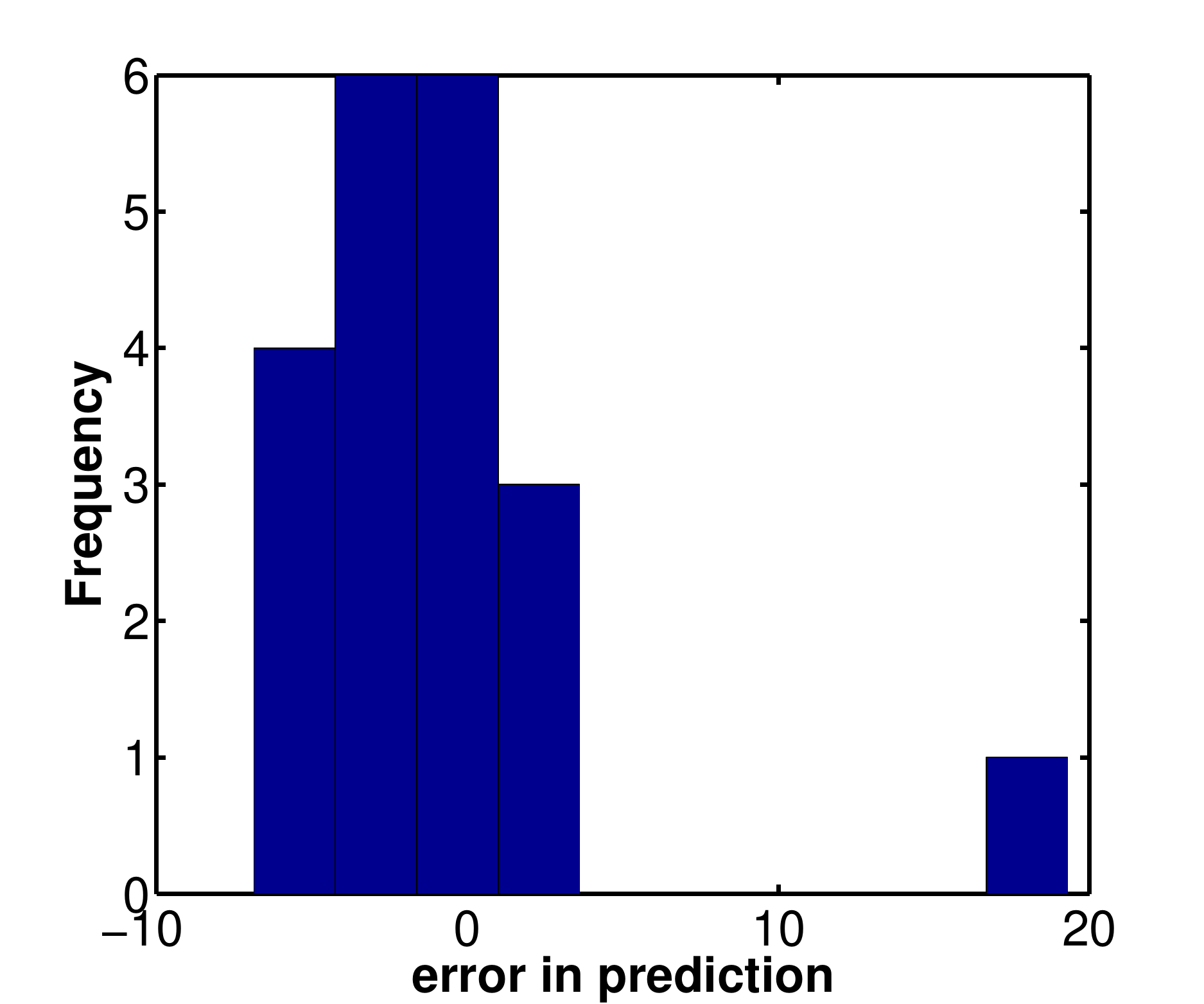}}
  \subfigure[$1000$ samples] {\includegraphics[scale=0.4]{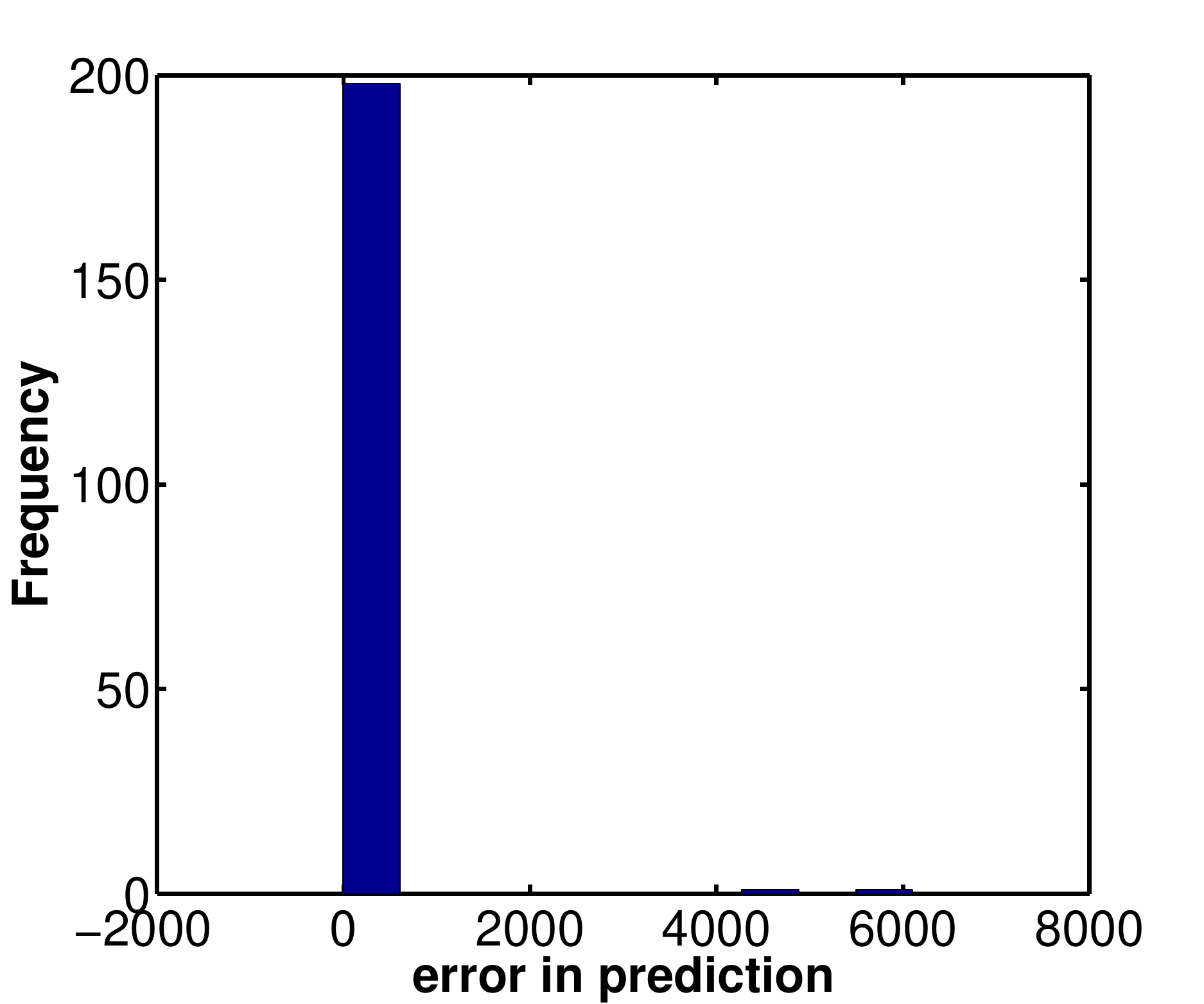}}
\caption{ Histogram of errors in prediction of the optimal reduced basis dimension using ANN for different sample sizes
\label{fig:hist_NNDim}}
\end{figure}
%


\begin{figure}[h]
  \centering
  \subfigure[$100$ samples] {\includegraphics[scale=0.4]{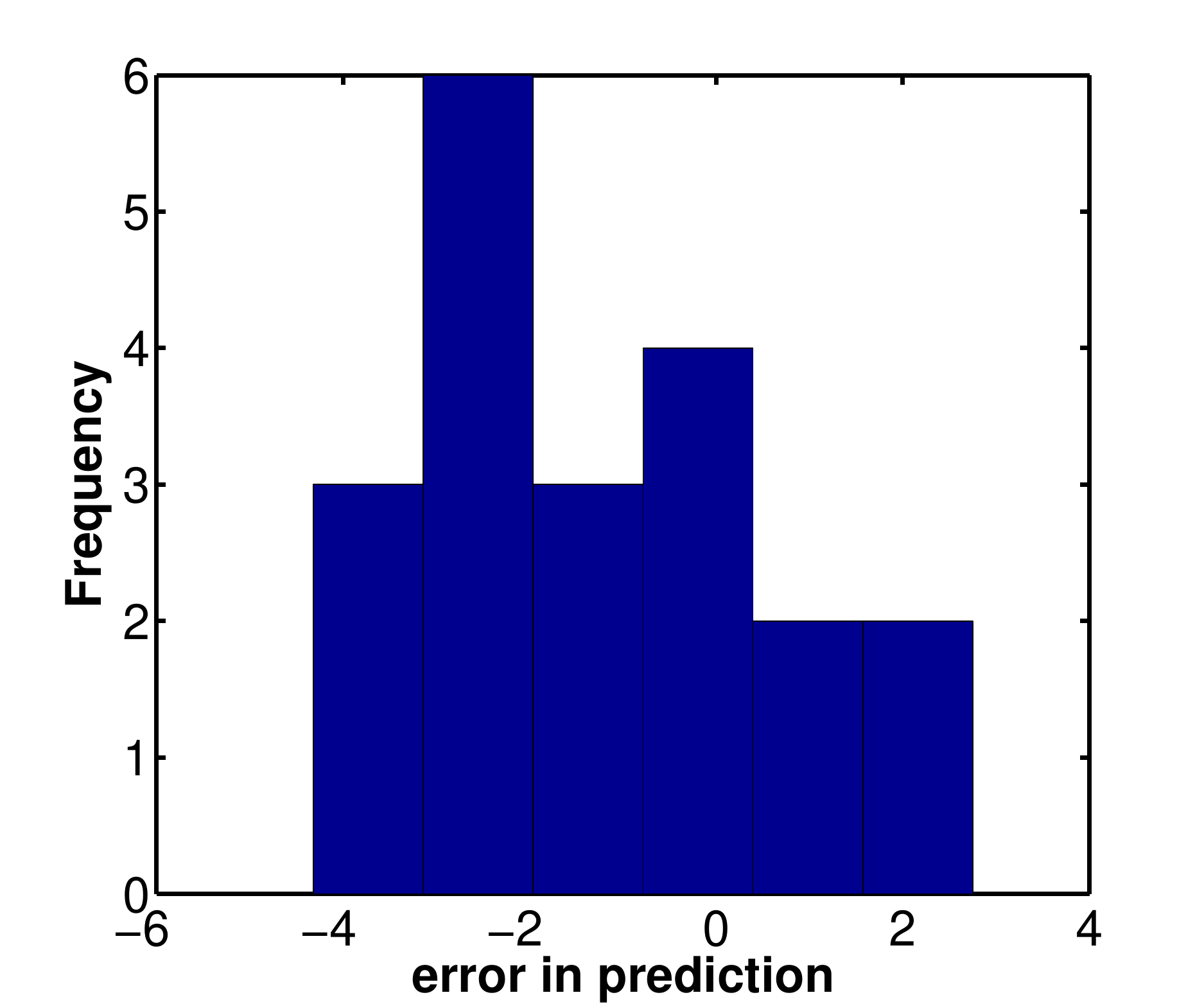}}
    \subfigure[$1000$ samples] {\includegraphics[scale=0.4]{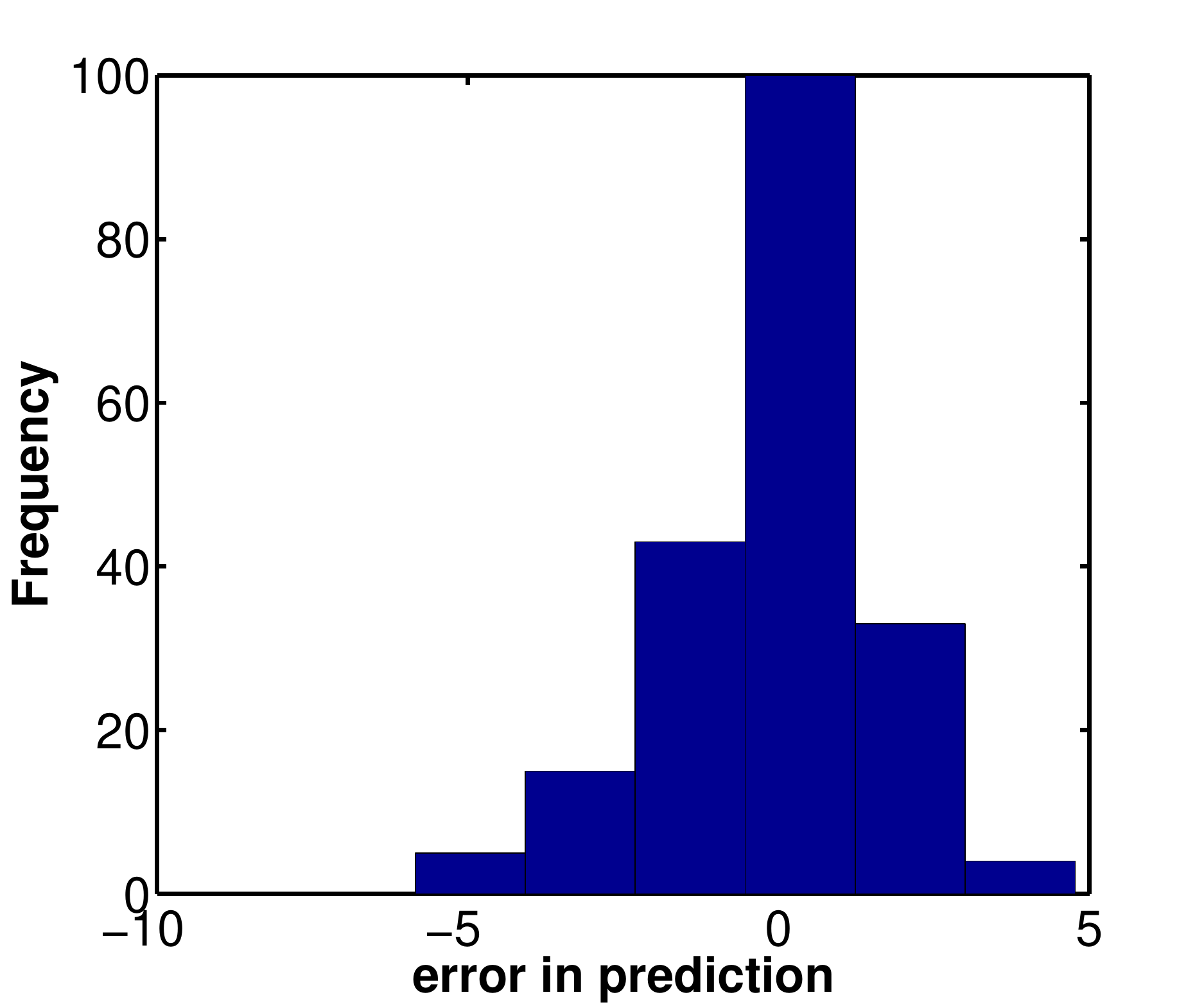}}
\caption{ Histogram of errors in prediction of the optimal reduced basis dimension using GP for different sample sizes
\label{fig:hist_GPDim}}
\end{figure}
%


To assess the accuracy of the probabilistic models, the data set is randomly partitioned into $5$ equal size sub-samples, and $5-$ fold cross-validation test is implemented. The $5$ results from the folds are averaged and they are presented in table \ref{tab:experm1}. The neural network model correctly estimated the size of the reduced manifold in $87\%$ cases.  Gaussian process correctly estimates the POD size $53\%$ of the times. The variance results shows that the GP model has more stable predictions indicating a higher bias in the outcomes.

\begin{table}[H]
\begin{center}
\begin{small}
    \begin{tabular}{ | c | p{1.55cm} | p{1.25cm} | c | c | c | p{1.55cm} |  p{2.75cm} |}
    \hline
  Dimension discrepancies& zero & one  & two  &  three  & four  & $>$ four & $ VAR $
      \\ \hline
 ANN & $87\% $ & $11\%$ & $2 \%$ & 0 & 0 & 0 & $2.779 \times 10^{-3}$
     \\ \hline
 GP & $53\%$ & $23 \%$ & $15\%$ & $5 \%$ &$ 3\%$ & $1 \%$ & $4.575 \times 10^{-4}$
     \\ \hline
    \end{tabular}
\end{small}
\end{center}
 \caption{POD basis dimension discrepancies between the ANN and GP predictions and true values over 5 fold cross validation. The errors variance is also computed.}
  \label{tab:experm1}
\end{table}

In figure \ref{fig:expm1_pod}, we compare the output of our probabilistic approaches against the eigenvalues estimation on a set of randomly selected test data. The eigenvalue estimation is the standard method for selecting the optimal reduced manifold dimension when a prescribed level of accuracy of the reduced solution is desired.  Here the desired accuracy $\varepsilon$ was set to $10^{-3}$. The mismatches between the predicted and true dimensions are depicted in the figure \ref{fig:expm1_pod}.  The predicted values are the averages over five different models where each model of ANN and GP are trained on random $80 \%$ split of dataset and tested on the fixed selected $20 \%$ test data. We notice that the snapshots matrix spectrum underestimates the true size of the manifold as expected since the errors due to integration in the reduced space are not accounted. The neural network predictions were extremely accurate for most of the samples  while the Gaussian process usually overestimated the reduced manifold dimensions.

%
\begin{figure}[H]
	\begin{centering}
	\includegraphics[width=0.55\textwidth, height=0.45\textwidth]{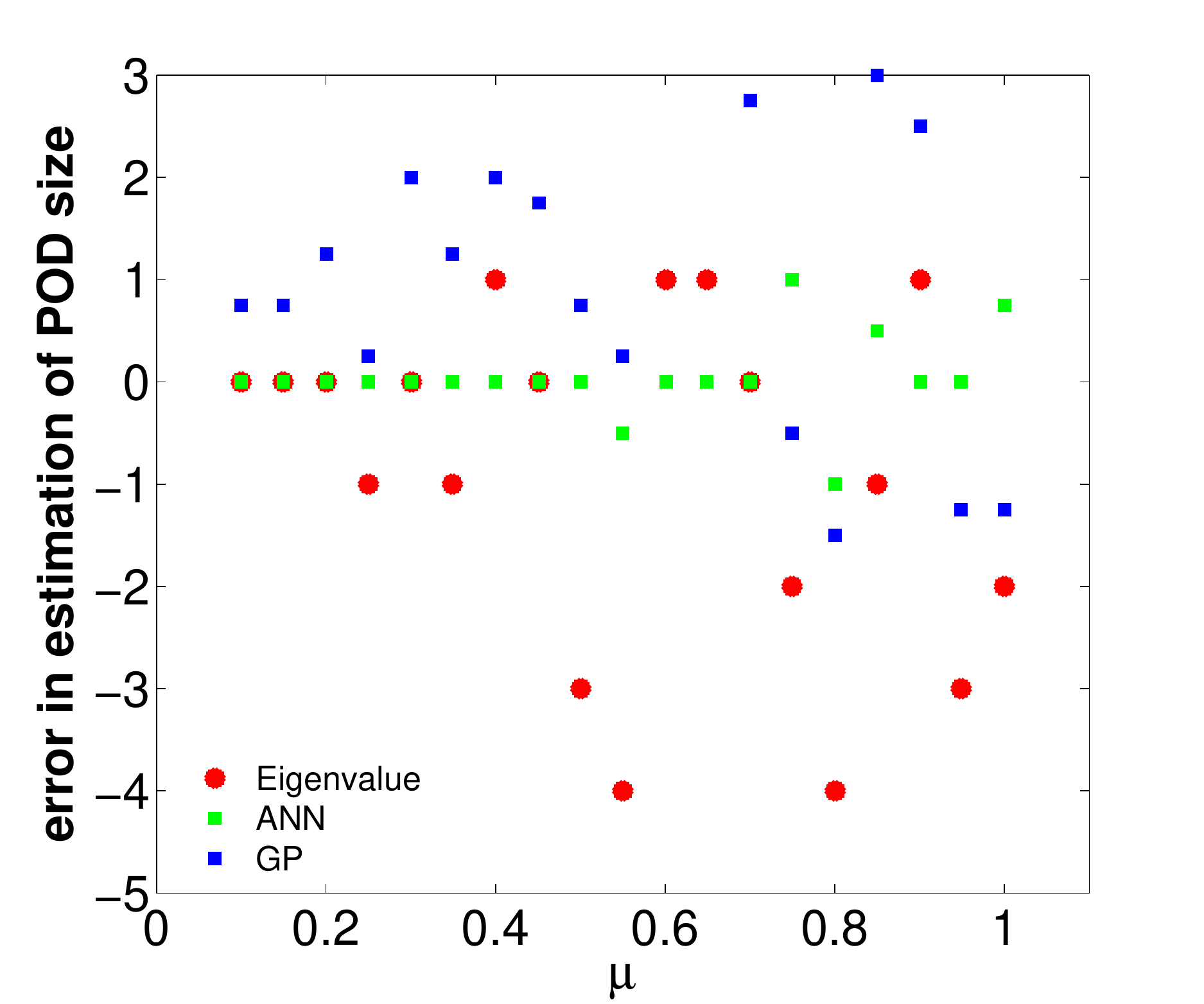}
    \caption{Average error of the POD size estimations on a randomly selected test data with desired accuracy of $\varepsilon=10^{-3}$.}
	\label{fig:expm1_pod}
	\end{centering}
\end{figure}
%


\section{Conclusions}
\label{sect:conc}

This work demonstrates the value of machine learning approaches to guide the construction of parametric space partitioning for the usage of efficient and accurate local reduced order models. While the current methodologies are defined in the sense of Voronoi tessellation \cite{du2002model} and rely on K-means algorithms \cite{amsallem2012nonlinear,Amsallem_Zahr_Washabaugh_2015,kaiser2014cluster}, our approach delimitates sub-regions of the parametric space by making use of Gaussian Processing and Artificial Neural Networks models for the errors of reduced order models and parametric domain sampling. The employed machine learning models differ from the one proposed in \cite{drohmann2015romes}  having more additional features such as reduced subspace dimension and are specially projected for accurate predictions of local reduced order models errors.  For each sub-region, an associated reduced order basis and operators are constructed depending on a single representative high-fidelity trajectory and, the corresponding local reduced order models solutions have known precision levels. The novel methodology is applied for a 1D-Burgers model, and a parametric map covering the viscosity domain with parametric sub-intervals and associated errors thresholds is designed.

Our numerical experiments revealed the non-monotonic property of the reduced order model error with respect to the distance between the current parametric configuration and the one used to generate the reduced subspace. Thus we proposed machine learning models for selecting a hierarchy of reduced bases producing the most accurate solutions for a new parameter configuration. Based on this hierarchy, three already existing methods involving bases interpolation and concatenation and high-fidelity model solutions interpolation are applied to enhance the quality of the associated reduced order model solutions.  It has been shown that the assumption of linear variation of the basis over the parametric space leads to a different reduced basis formulation than if the linear variation hypothesis of the high-fidelity solution over the parametric domain is followed. Several experiments were performed by scaling the time and space and modifying the nonlinear characteristics of the model. In most cases, interpolating the already existing high-fidelity trajectories generated the most accurate reduced order models for a new viscous parameter revealing that the solution behavior over the parametric region under study can be linearly approximated. Lagrange interpolation of bases in the tangent space of the Grassmann manifold and concatenation of bases for larger reduced subspaces showed also good performances.

Finally we addressed the problem of selecting the dimension of the reduced order model when its solution must satisfy a desired level of accuracy. Our approach based on learning better estimates the ROM basis dimension in comparison with the results obtained by truncating the spectrum of the snapshots matrix.

A future goal is to decrease the computational complexity of the parametric map design procedure. Currently the training data required by the probabilistic models relies on many high-fidelity simulations. By employing fast a-posteriori error estimation results \cite{nguyen2009reduced}, this dependency will be much decreased. In addition we plan to incorporate residual norm and rigorous error bounds among the data fitting models features to enhance their prediction capabilities as remarked in \cite{drohmann2015romes}.

\section*{Acknowledgements}
This work was supported in part and by the award NSF CCF 1218454 and by the Computational Science Laboratory at Virginia Tech.

\label{sect:bib}
\bibliographystyle{plain}

\bibliography{ML_bib,Additions_sandu,CDS_E_proposal,comprehensive_bibliography1,data_assim_fdvar,data_assim_weak-fdvar,NSF_KB,POD_bib,Razvan_bib,Razvan_bib_ROM_IP,Razvan_update_bib,reduced_models,ROM_state_of_the_art,sandu,Software}

\end{document}